\documentclass{article}

\usepackage[main,final]{neurips_2026}

\usepackage[utf8]{inputenc}
\usepackage[T1]{fontenc}

\usepackage{microtype}
\usepackage{graphicx}
\usepackage{booktabs}
\usepackage{amsmath}
\usepackage{amssymb}
\usepackage{mathtools}
\usepackage{amsfonts}
\usepackage{amsthm}
\usepackage{xcolor}
\usepackage{url}
\usepackage[hidelinks]{hyperref}

\usepackage{algorithm}
\usepackage{algpseudocode}

\usepackage{threeparttable}
\usepackage{adjustbox}
\usepackage{tabularx}
\usepackage{multirow}
\usepackage{makecell}
\usepackage{rotating}
\usepackage{tikz}
\usepackage{textcomp}
\usepackage{bm}
\usepackage{bbm}
\usepackage{mathrsfs}
\usepackage{wrapfig}
\usepackage{soul}
\usepackage{placeins}
\usepackage{float}
\usepackage[textsize=tiny]{todonotes}

\newtheorem{theorem}{Theorem}[section]
\newtheorem{corollary}[theorem]{Corollary}
\newtheorem{lemma}[theorem]{Lemma}

\newtheorem{assumption}{Assumption}[section]
\newcommand{\loss}{\mathrm{loss}}

\title{SCDM: Scalable Causal Discovery in Nonlinear Temporal Systems with Meta-Learning}

\author{%
  Jingbo Wang \quad Kegeng Tang \quad Zihao Wang \quad
  Shaogang Ren\thanks{Corresponding author:  \texttt{sren9@tennessee.edu}.} \\
  University of Tennessee at Chattanooga \\
  \texttt{\{ydk297,hcw575,zihao.wang,sren9\}@tennessee.edu}
}

\begin{document}



\maketitle

\begin{abstract}
Causal discovery from nonlinear multivariate time series is challenging in high-dimensional systems, where the number of candidate directed relations grows quadratically with the number of variables. Existing methods are often limited by target-wise model fitting, repeated conditional testing, or expensive graph-search procedures. We propose SCDM, a shared-parameter meta-learning framework for scalable temporal causal discovery. SCDM treats each target variable as a task while learning a shared temporal predictor and a shared causal-strength matrix across tasks. Each meta-episode updates only a subset of target-variable tasks, allowing the model to accumulate structural evidence across the full system without exhaustive target-wise optimization. After training, a post-training graph readout converts the learned predictor into continuous directed causal scores for threshold-independent evaluation. We also provide a finite-error recovery principle showing that, under fully observed delayed-system assumptions and a positive risk-gap condition, thresholding SCDM scores recovers the graph when statistical, meta-optimization, and readout errors are below the population margin. Experiments on controlled synthetic, neuroimaging-inspired, high-dimensional, and real-world temporal benchmarks demonstrate competitive causal recovery and improved scalability in large temporal systems.
\end{abstract}

\begin{center}
  \small
  \textbf{Project page:}\quad
  \url{https://shaogangren.github.io/SCDM/}\\[2pt]
  \textbf{Code:}\quad
  \url{https://github.com/ShaogangRen/SCDM}
\end{center}

\section{Introduction}
\label{sec:introduction}

Causal discovery from multivariate time series seeks directed dependencies in systems such as brain activity, financial markets, and traffic networks. Temporal ordering supports predictive causal analysis through the Granger framework~\citep{granger1969investigating,lutkepohl2005new}. For $d$ observed variables, however, the candidate cross-variable edge space contains $d(d-1)$ ordered pairs, so nonlinear prediction, target-wise fitting, and repeated conditional tests become costly in high-dimensional systems~\citep{runge2019detecting,runge2020discovering}.

Existing methods address parts of this challenge through structured sparsity, temporal convolutions, self-explaining predictors, perturbation-based scores, or coarse-to-fine discovery~\citep{tank2021neural,nauta2019causal,marcinkevivcs2021interpretable,bi2025uncle,cheng2024cuts+}. Many of them still require target-specific optimization or repeated graph evaluation, which becomes limiting when the edge space contains millions of directed pairs. The question is therefore how to share nonlinear prediction across targets while keeping target-specific structural evidence.

We propose Scalable Causal Discovery with Meta-Learning (SCDM), a shared-parameter framework for nonlinear temporal causal discovery. Inspired by gradient-based meta-learning~\citep{finn2017model,nichol2018first,yang2022calibrating}, SCDM treats the prediction of each target variable as a related task in the same dynamical system. It learns a shared nonlinear temporal predictor and a shared causal-strength matrix. Each meta-episode samples a small batch of target-variable tasks, performs temporary support adaptation, and updates the shared parameters through query losses. Repeated over episodes, this procedure accumulates structural evidence across variables while keeping each update tied to the sampled task batch rather than the full set of targets. This target-task subsampled meta-training is the main scalability mechanism of SCDM.

SCDM separates training from graph extraction. After meta-training, a post-training perturbation-style readout, related to recent perturbation-based dynamic causal scoring~\citep{bi2025uncle}, converts the trained predictor into continuous directed source--target scores. This readout is a graph-extraction interface rather than the main training contribution. We also provide a finite-error recovery principle for the fully observed delayed temporal setting: if statistical, meta-optimization, and readout errors are smaller than the population risk-gap margin, thresholding the learned scores separates causal parents from non-parents.

We evaluate SCDM on controlled nonlinear benchmarks with known ground-truth graphs, The Virtual Brain (TVB)-inspired neuroimaging simulations~\citep{sanz2013virtual}, a high-dimensional voxel-level functional magnetic resonance imaging (fMRI) benchmark, and the real-world LargeST traffic benchmark~\citep{liu2023largest}. On controlled benchmarks, SCDM is competitive with strong baselines rather than uniformly best. Its main advantage appears at large scale: it gives a better runtime--accuracy tradeoff on 1{,}000 variables and finishes on 3{,}000 variables with far less time and memory. Two ablations separate the roles of the two training components: target-task subsampling lowers training cost without hurting recovery, and the support/query meta-objective yields shared parameters that adapt better to held-out targets than pooled multi-task training.

The main contributions are:
\begin{itemize}
    \item We formulate nonlinear temporal causal discovery as a shared-parameter episodic meta-learning problem. Each target variable defines one prediction task, and all tasks share one temporal predictor and one causal-strength matrix, so no separate model is fitted for each target.
    \item We separate two training components with different roles. Target-task subsampling sets how many target tasks each update touches ($B$ instead of $d$), which controls training cost. The support/query objective trains the shared parameters to perform well after a few target-specific steps, which pooled multi-task training does not do. We test each component with its own ablation.
    \item We give a finite-error recovery result for fully observed delayed temporal systems, and evaluate SCDM on controlled, neuroimaging-inspired, voxel-level ($d=1000$), and real-world ($d=3000$) benchmarks, together with stress tests for collinear sources and withheld variables.
\end{itemize}

\section{Related Work}
\label{sec:related_work}

\paragraph{Temporal causal discovery.}
Temporal causal discovery exploits lagged dependencies to infer directed relations in multivariate time series. Classical Granger causality defines directed influence by whether one variable's history improves another variable's prediction~\citep{granger1969investigating}, with VAR models providing a standard linear implementation~\citep{lutkepohl2005new}. VARLiNGAM further combines vector autoregression with non-Gaussianity assumptions for structural identification~\citep{hyvarinen2010estimation}. Constraint-based methods such as PCMCI and PCMCI+ use conditional independence and momentary conditional independence tests for autocorrelated time series~\citep{runge2019detecting,runge2020discovering}, with related work studying time-series-specific assumptions and conditional independence structures~\citep{entner2010causal,peters2013causal,malinsky2018causal}. These methods provide important foundations, but their linear, component-wise, or repeated-testing structures can become costly in nonlinear high-dimensional systems where the candidate edge space scales as $d(d-1)$. SCDM keeps the predictive interpretation of temporal causality while replacing repeated target-wise fitting or testing with a shared meta-learned temporal predictor and causal-strength matrix.

\paragraph{Neural temporal causal discovery.}
Neural methods improve nonlinear modeling capacity by replacing linear predictors with expressive sequence models. Neural Granger causality uses component-wise predictors with structured sparsity~\citep{tank2021neural}; GVAR uses self-explaining neural networks for nonlinear and time-varying Granger effects~\citep{marcinkevivcs2021interpretable}; and TCDF combines temporal convolutions, attention-based interpretation, and causal validation~\citep{nauta2019causal}. More general neural causal discovery methods use differentiable graph learning, continuous optimization, and score-based discovery for nonlinear systems~\citep{zheng2018dags,kalainathan2018sam,ng2022masked,zhang2024learning,zhang2025invargc}. Recent perturbation-based methods such as UnCLe estimate dynamic causal effects through uncoupled temporal modeling and perturbation-based causal scoring~\citep{bi2025uncle}, related to occlusion-style attribution under input removal or perturbation~\citep{zeiler2014visualizing,ribeiro2016should,shrikumar2017learning}. SCDM is related to these methods through nonlinear prediction and continuous graph scoring, but uses perturbation-style scoring only as a post-training readout; its main scalability mechanism is shared episodic meta-training across target-variable tasks.

\paragraph{Causal inference scalability and meta-learning.}
Existing scalable strategies include sparsity regularization and fast sparse solvers~\citep{ren2020thunder,ren2024dynamic}, variable-wise decomposition, screening~\citep{ren2017safe,ren2017scalable}, and coarse-to-fine search; for example, neural Granger methods use structured sparsity~\citep{tank2021neural}, while CUTS+ targets high-dimensional irregular time series through coarse-to-fine discovery and message-passing prediction modules~\citep{cheng2024cuts+}. SCDM instead formulates target-variable prediction as a family of related tasks. Inspired by MAML and first-order meta-learning~\citep{finn2017model,nichol2018first}, and related meta-learning applications in few-shot learning, optimization, and structured prediction~\citep{vinyals2016matching,ravi2017optimization,yao2019hierarchically}, SCDM samples a subset of target-variable tasks per episode and updates a shared temporal predictor together with a shared causal-strength matrix through query losses. This support/query formulation is used not for few-shot classification, but as a scalable mechanism for accumulating structural evidence across variables without updating all target tasks at every iteration.

\section{Problem Formulation}
\label{sec:problem}

\paragraph{Problem setup and target-variable tasks.}
\label{subsec:problem_setup}
Let $X=\{x_t\}_{t=1}^{T}$ be a $d$-dimensional time series, with $x_t\in\mathbb{R}^{d}$. For observed-history order $p$, define $u_t=[x_{t-1},\ldots,x_{t-p}]\in\mathbb{R}^{dp}$ for $t=p+1,\ldots,T$. Task $i$ predicts the target $y_t^{(i)}=x_t^{(i)}$ from this history. Following support/query meta-learning~\citep{finn2017model}, the $d$ target variables define $d$ related tasks within the same dynamical process.

All tasks share the parameters $\Theta=\{\theta,W\}$, where $\theta$ parameterizes a nonlinear temporal predictor and $W\in\mathbb{R}^{d\times d}$ is a causal-strength matrix with $W_{j,i}$ the strength from source $j$ to target $i$. Each task starts from this shared initialization and adapts only a temporary local copy; no private target-specific head, embedding, or persistent adapted model is introduced. Causal structure is therefore represented once, globally, by $W$, and scalability comes from updating only sampled target tasks.

Figure~\ref{fig:method_overview} summarizes the shared-parameter episodic pipeline, and Table~\ref{tab:scdm_notation} at the start of the appendix lists all notation.

\begin{figure}[t]
    \centering
    \includegraphics[width=0.98\textwidth]{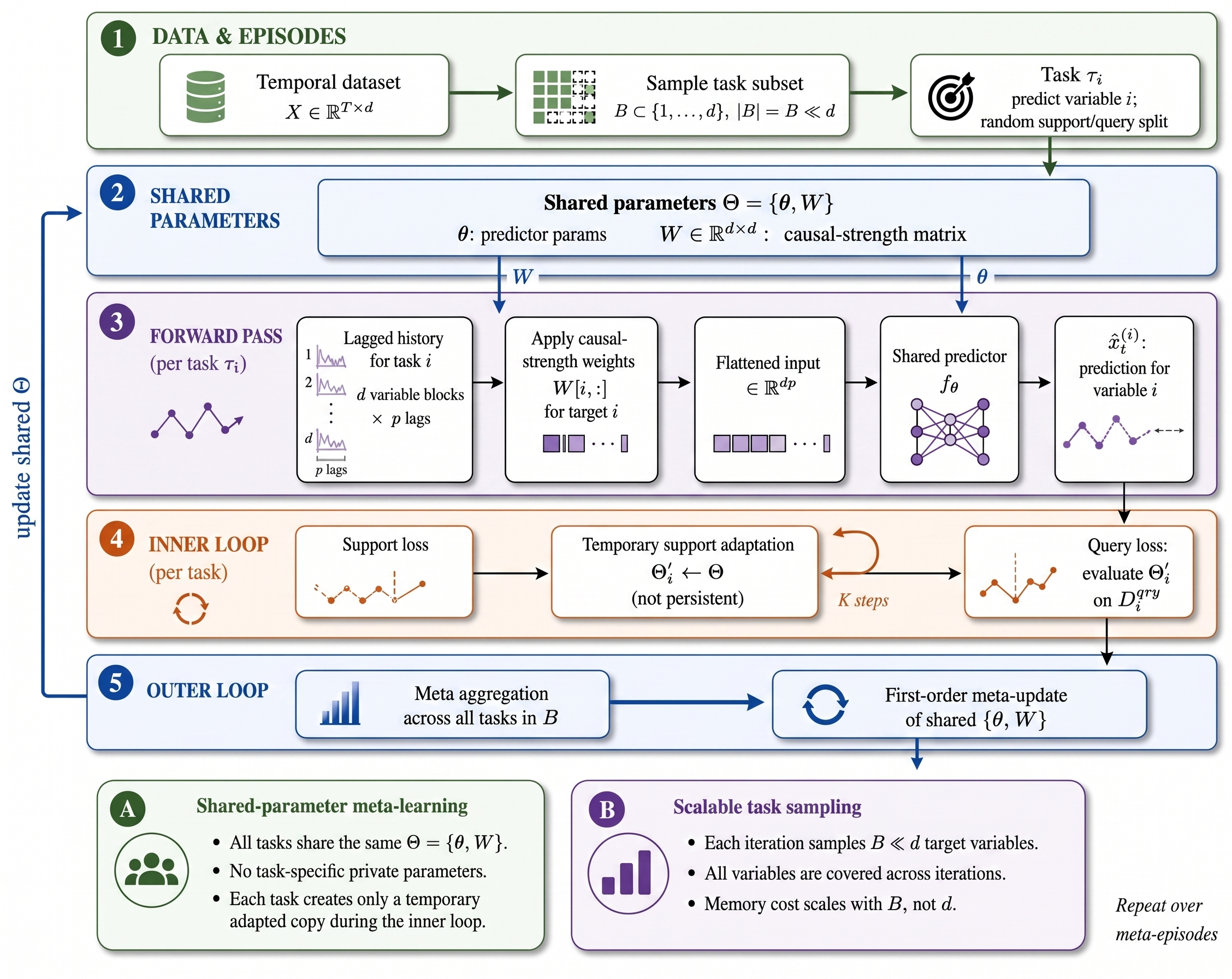}
    \caption{Overview of SCDM. Each target variable is treated as a task; sampled tasks undergo temporary support adaptation, and their query losses update the shared temporal predictor and causal-strength matrix.}
    \label{fig:method_overview}
\end{figure}

\paragraph{Structured predictor over observed variables.}
\label{subsec:nonlatent_predictor}
For the fully observed temporal system considered in the main method, the lagged history of each source variable \(j\) is modulated by a shared variable-level structural weight \(W_{j,i}\) when predicting target variable \(i\). Specifically, the weighted lagged history \(\widetilde{H}^{(i)}_t\in\mathbb{R}^{d\times p}\) has entries \(\widetilde{H}^{(i)}_{t,j,\ell}=W_{j,i}x^{(j)}_{t-\ell}\) for \(j=1,\ldots,d\) and \(\ell=1,\ldots,p\). The weighted history is flattened and passed to the shared predictor as \(\widehat{x}^{(i)}_t=f_\theta(\mathrm{vec}(\widetilde{H}^{(i)}_t))\). Since \(W_{j,i}\) gates all lagged coordinates of source \(j\) when predicting target \(i\), the learned matrix \(W\) provides a shared variable-level directed causal-strength representation, which is later converted into continuous graph scores by the post-training readout.

\section{Methodology}
\label{sec:method}

\subsection{Meta-Learning for Causal Discovery in Nonlinear Temporal Systems}
\label{sec:meta-method}

SCDM alternates temporary target-task adaptation with persistent shared updates. Each episode samples a target-task batch $\mathcal{B}$ and support/query windows for its tasks. Starting from the shared initialization $\Theta$, each task performs support updates and evaluates the adapted parameters on its query window. The averaged query loss then updates the persistent shared parameters, and the adapted task copies are discarded. The objectives below specify this procedure.

For each target variable \(i\), let \(\mathcal{D}^{\mathrm{sup}}_i\) and
\(\mathcal{D}^{\mathrm{qry}}_i\) denote support and query temporal windows, and let \(\Theta=\{\theta,W\}\) be the shared parameters. The support objective is
\begin{equation}
\mathcal{L}^{\mathrm{sup}}_i(\Theta)
=
\frac{1}{|\mathcal{D}^{\mathrm{sup}}_i|}
\sum_{(u_t,y^{(i)}_t)\in\mathcal{D}^{\mathrm{sup}}_i}
\loss\!\left(
f_\theta\!\left(\mathrm{vec}\!\left(\widetilde{H}^{(i)}_t\right)\right),
y^{(i)}_t
\right)
+\lambda_W\|W[:,i]\|_1 .
\label{eq:support_loss}
\end{equation}
Here \(\widetilde{H}^{(i)}_t\) is the weighted lagged history for target \(i\), constructed using the source-to-target weights \(W_{j,i}\). Starting from the shared initialization, task \(i\) performs \(K\) temporary support updates:
\begin{equation}
\Theta^{(0)}_i \leftarrow \Theta,
\qquad
\Theta^{(k+1)}_i
=
\Theta^{(k)}_i
-
\alpha\nabla_{\Theta^{(k)}_i}
\mathcal{L}^{\mathrm{sup}}_i(\Theta^{(k)}_i),
\quad k=0,\ldots,K-1 .
\label{eq:inner_update}
\end{equation}
The adapted parameters \(\Theta'_i=\Theta^{(K)}_i\) are used only within the current episode; they are not stored as target-specific models. The query loss is
\begin{equation}
\mathcal{L}^{\mathrm{qry}}_i(\Theta'_i)
=
\frac{1}{|\mathcal{D}^{\mathrm{qry}}_i|}
\sum_{(u_t,y^{(i)}_t)\in\mathcal{D}^{\mathrm{qry}}_i}
\loss\!\left(
f_{\theta'_i}\!\left(
\mathrm{vec}\!\left(\widetilde{H}^{(i)}_t(\Theta'_i)\right)
\right),
y^{(i)}_t
\right)
+\lambda_W\|W'_i[:,i]\|_1 .
\label{eq:query_loss}
\end{equation}
The $\ell_1$ terms in \eqref{eq:support_loss} and \eqref{eq:query_loss} push the weights of uninformative sources toward zero~\citep{tibshirani1996regression}, which matches the expectation that each target has only a few true parents. For a sampled target-task mini-batch \(\mathcal{B}\), we define the meta-objective as:
\begin{equation}
\mathcal{L}_{\mathrm{meta}}
=
\frac{1}{|\mathcal{B}|}
\sum_{i\in\mathcal{B}}
\mathcal{L}^{\mathrm{qry}}_i(\Theta'_i).
\label{eq:meta_objective}
\end{equation}
Only the query objective updates the persistent shared parameters; the task-local adapted parameters are discarded after the episode.

We use a first-order MAML approximation~\citep{finn2017model,nichol2018first},
\begin{equation}
\nabla_{\Theta}\mathcal{L}^{\mathrm{qry}}_i(\Theta'_i)
\approx
\nabla_{\Theta'_i}\mathcal{L}^{\mathrm{qry}}_i(\Theta'_i),
\qquad
\Theta \leftarrow \Theta-\beta\nabla_{\Theta}\mathcal{L}_{\mathrm{meta}} .
\label{eq:outer_update}
\end{equation}
This first-order update avoids differentiation through the inner optimization trajectory; its cost still grows with the predictor input dimension $dp$ and with the number of sampled targets $B$. The two parts of this procedure play different roles. Target-task subsampling sets how many tasks each update touches, which controls training cost. The support/query split makes each query loss measure how well the shared parameters perform after $K$ target-specific steps, so the outer update favors parameters that adapt well to every target rather than parameters that only fit the average target. Section~\ref{sec:ablation} tests the two roles separately. Both the support and query losses use the same sparsity coefficient $\lambda_W$.

\begin{algorithm}[t]
\caption{SCDM meta-training for scalable temporal causal discovery}
\label{alg:meta_main}
\begin{algorithmic}[1]
\Require Time series \(X_{1:T}\), lag order \(p\), task batch size \(B\), inner steps \(K\)
\State Initialize shared parameters \(\Theta=\{\theta,W\}\)
\Repeat
    \State Sample target-task batch \(\mathcal{B}\subset\{1,\ldots,d\}\)
    \For{\(i\in\mathcal{B}\)}
        \State Sample support/query windows \(\mathcal{D}^{\mathrm{sup}}_i,\mathcal{D}^{\mathrm{qry}}_i\)
        \State Adapt temporarily: \(\Theta'_i \leftarrow \mathrm{Adapt}_K(\Theta,\mathcal{D}^{\mathrm{sup}}_i)\)
        \State Compute query loss \(\mathcal{L}^{\mathrm{qry}}_i(\Theta'_i)\)
    \EndFor
    \State Meta-update shared \(\Theta\) using averaged query loss over \(\mathcal{B}\)
\Until{convergence}
\State \Return learned shared predictor \(f_\theta\) and causal-strength matrix \(W\)
\end{algorithmic}
\end{algorithm}

Algorithm~\ref{alg:meta_main} summarizes the training procedure. Task sampling reduces the number of target losses processed in each episode, while the cost of each predictor evaluation and the storage of $W$ still grow with $d$; Appendix~\ref{app:baseline_hparams} separates training, storage, and readout costs.

\subsection{Post-training Graph Score Readout}
\label{sec:score_readout}

After meta-training, SCDM converts the learned shared predictor and causal-strength parameterization into directed source--target scores through a post-training graph readout. This readout is not part of the optimization objective; it is the graph-extraction interface for the trained meta-predictor. We use two readouts. The source-isolated readout below measures how much each candidate source alone helps the trained predictor. The structural readout uses $|W_{j,i}|$ directly; it needs no extra model evaluation, which makes it the only affordable choice when $d\ge 1000$. Each main-text recovery table states its readout, and Appendix~\ref{app:supplementary_results} compares both on the same trained models.

The source-isolated readout is defined as follows. For a candidate source variable \(j\) and target variable \(i\), we retain only the lagged history of source \(j\) and set all other source histories to zero. Let
\(L^{(i)}_{\mathrm{zero}}\) denote the prediction loss under a zero-input baseline, and let \(L^{(i)}_j\) denote the loss when only source \(j\) is retained. The directed score is
\begin{equation}
s_{j\to i}
=
\frac{
L^{(i)}_{\mathrm{zero}} - L^{(i)}_j
}{
L^{(i)}_{\mathrm{zero}}+\epsilon
},
\label{eq:readout_score}
\end{equation}
where \(\epsilon>0\) is a small numerical constant. Larger values indicate that source \(j\) yields a larger reduction in prediction loss for target \(i\), and are therefore interpreted as stronger directed temporal influence. This readout is related to perturbation-based temporal causal scoring~\citep{bi2025uncle}, but the main scalability mechanism of SCDM is the shared episodic meta-training procedure. Full implementation details are provided in Appendix~\ref{app:score_readout_details}. Both readouts measure predictive contribution rather than the effect of an intervention. When two sources carry nearly the same information, the predictor can rely on either one, so the score of an individual edge may move to a correlated non-parent; Appendix~\ref{app:collinear} measures this effect with a controlled collinearity test.

\subsection{Finite-Error Graph Recovery}
\label{sec:theory}

We analyze the fully observed non-latent setting to characterize when the scores produced by SCDM recover the delayed causal graph. The analysis follows the standard risk-gap view of temporal causal discovery and is stated under delayed causal sufficiency, correct lag specification, and sufficient regularity of the time-series process. Under these conditions, thresholding the learned scores recovers the graph whenever the statistical, meta-optimization, and readout
errors are smaller than the population separation margin.

For each target variable \(i\), SCDM treats one-step prediction of \(X_t^{(i)}\) as a task. Let \(H_t\in\mathbb{R}^{d\times p}\) be the full lagged history and \(H_{t,-j}\) the history with all lags of source \(j\) removed. Define the population full risk \(R_i^\star=\inf_{f\in\mathcal{F}_i}\mathbb{E}[\loss(f(H_t),X_t^{(i)})]\) and the source-restricted risk \(R_{i,-j}^\star=\inf_{f\in\mathcal{F}_{i,-j}}\mathbb{E}[\loss(f(H_{t,-j}),X_t^{(i)})]\), where \(\mathcal{F}_{i,-j}\) excludes predictors using source \(j\)'s lagged history. The population score is \(S^\star_{ji}=R^\star_{i,-j}-R^\star_i\), measuring the optimal risk increase for target task \(i\) when source \(j\) is removed. We assume a positive delayed-system margin
\begin{equation}
\Delta_{\min}=\min_{(j,i):\,j\in\mathrm{Pa}(i)}S^\star_{ji}>0,\qquad S^\star_{ji}=0\ \text{ for } j\notin\mathrm{Pa}(i).
\label{eq:min_gap_main}
\end{equation}

In SCDM, the practical score is obtained after a finite meta-training procedure and a post-training readout. Let \(\widetilde{S}_{ji,T,N}\) denote this score after observing \(T\) time points and running \(N\) outer meta-updates with task
batch size \(B\) and \(K\) inner steps. Its deviation from the population risk-gap score can be decomposed as
\begin{equation}
\max_{i,j}
\left|
\widetilde{S}_{ji,T,N}
-
S^\star_{ji}
\right|
\le
\epsilon_{\mathrm{stat}}(T,d,\delta)
+
\epsilon_{\mathrm{meta}}(N,B,K)
+
\epsilon_{\mathrm{readout}}(T,N).
\label{eq:main_error_decomposition}
\end{equation}
Here \(\epsilon_{\mathrm{stat}}\) is the finite-sample error under temporal dependence; \(\epsilon_{\mathrm{meta}}\) is the SCDM-specific residual from finite episodic meta-training, task subsampling, support adaptation, first-order meta-updates, and shared query-loss optimization; and \(\epsilon_{\mathrm{readout}}\) is the gap between the implemented readout and the ideal empirical source-ablation score.

\begin{theorem}[Finite-error recovery of SCDM scores]
\label{thm:finite_error_graph_recovery}
Consider a fully observed delayed temporal system satisfying causal sufficiency, correct lag specification, stationarity with sufficient mixing, and the population margin condition above. Suppose that, with probability at least \(1-\delta\), the SCDM score satisfies Eq.~\eqref{eq:main_error_decomposition}. If
\begin{equation}
\epsilon_{\mathrm{stat}}(T,d,\delta)
+
\epsilon_{\mathrm{meta}}(N,B,K)
+
\epsilon_{\mathrm{readout}}(T,N)
<
\frac{\Delta_{\min}}{2},
\label{eq:recovery_condition_main}
\end{equation}
then thresholding \(\widetilde{S}_{ji,T,N}\) at any \(\gamma\in(\epsilon_{\mathrm{tot}},\Delta_{\min}-\epsilon_{\mathrm{tot}})\), where \(\epsilon_{\mathrm{tot}}=\epsilon_{\mathrm{stat}}(T,d,\delta)+\epsilon_{\mathrm{meta}}(N,B,K)+\epsilon_{\mathrm{readout}}(T,N)\), recovers the true delayed graph with probability at least \(1-\delta\). In particular, \(\gamma=\Delta_{\min}/2\) is valid.
\end{theorem}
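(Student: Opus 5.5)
The argument is a direct margin-separation argument on the high-probability event where the error decomposition Eq.~\eqref{eq:main_error_decomposition} holds. Let $\mathcal{E}$ denote that event; by hypothesis $\Pr(\mathcal{E})\ge 1-\delta$. Everything below is deterministic on $\mathcal{E}$, so the probability statement follows at once. The structural assumptions (causal sufficiency, correct lag, stationarity with mixing) enter only through the margin condition Eq.~\eqref{eq:min_gap_main} and through the validity of the decomposition itself. Both are given, so I will not re-derive them.

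\emph{Step 1 (nonempty threshold interval).} Write $\epsilon_{\mathrm{tot}}$ as in the statement. Condition Eq.~\eqref{eq:recovery_condition_main} says $\epsilon_{\mathrm{tot}}<\Delta_{\min}/2$, which is equivalent to $\epsilon_{\mathrm{tot}}<\Delta_{\min}-\epsilon_{\mathrm{tot}}$. Hence the interval $(\epsilon_{\mathrm{tot}},\Delta_{\min}-\epsilon_{\mathrm{tot}})$ is nonempty. It also contains $\Delta_{\min}/2$, because $\epsilon_{\mathrm{tot}}<\Delta_{\min}/2$ and $\Delta_{\min}-\epsilon_{\mathrm{tot}}>\Delta_{\min}/2$. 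This settles the ``in particular'' clause once the main claim is proved.

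\emph{Step 2 (true edges exceed the threshold).} On $\mathcal{E}$, $\max_{i,j}|\widetilde S_{ji,T,N}-S^\star_{ji}|\le\epsilon_{\mathrm{tot}}$. Fix $j\in\mathrm{Pa}(i)$. Then $S^\star_{ji}\ge\Delta_{\min}$, so
\[
\widetilde S_{ji,T,N}\ge S^\star_{ji}-\epsilon_{\mathrm{tot}}\ge\Delta_{\min}-\epsilon_{\mathrm{tot}}>\gamma .
\]

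\emph{Step 3 (non-edges fall below the threshold).} Fix $j\notin\mathrm{Pa}(i)$. Then $S^\star_{ji}=0$, so
\[
\widetilde S_{ji,T,N}\le\epsilon_{\mathrm{tot}}<\gamma .
\]

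\emph{Step 4 (conclusion).} Define the estimated graph $\widehat G=\{(j,i):\widetilde S_{ji,T,N}>\gamma\}$. By Steps 2 and 3, $\widehat G$ contains exactly the pairs with $j\in\mathrm{Pa}(i)$. This holds simultaneously for all pairs, because the bound in Eq.~\eqref{eq:main_error_decomposition} is a uniform maximum over $(i,j)$. No union bound is needed, and the probability stays $1-\delta$.

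\emph{Main obstacle.} There is no substantive obstacle, since the statement assumes the uniform error decomposition rather than proving it. The only care needed is to keep the inequalities strict, which requires the open interval for $\gamma$. A secondary point is scale: $S^\star_{ji}$ is an unnormalized risk gap, while the readout Eq.~\eqref{eq:readout_score} is normalized. I would note that $\epsilon_{\mathrm{readout}}$ is defined to absorb this discrepancy, so the comparison is made on the common scale of $S^\star$.
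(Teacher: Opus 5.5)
Your proposal is correct and follows the paper's proof: on the single event where the uniform bound holds, true parents satisfy $\widetilde S_{ji,T,N}\ge\Delta_{\min}-\epsilon_{\mathrm{tot}}>\gamma$ and non-parents satisfy $\widetilde S_{ji,T,N}\le\epsilon_{\mathrm{tot}}<\gamma$, so every ordered pair is classified correctly at once with probability at least $1-\delta$. Your explicit check that the threshold interval is nonempty and contains $\Delta_{\min}/2$ is a small step the paper leaves implicit.
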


Theorem~\ref{thm:finite_error_graph_recovery} separates identifiability from finite-error estimation. The margin $\Delta_{\min}$ belongs to the delayed data-generating process, while SCDM enters through the meta-optimization and readout residuals. Task-subsampled meta-training is the estimation procedure, and correct recovery still requires these residuals to be small. The theorem gives no comparative guarantee between meta-training and other training objectives; Section~\ref{sec:ablation} compares them empirically. The proof and statistical-rate instantiation are given in Appendix~\ref{app:theory}.

Causal sufficiency is the assumption most likely to fail in practice. If an unobserved variable drives both $j$ and $i$, the history of $j$ carries information about that hidden driver, so removing $j$ raises the risk of target $i$ even when $j\notin\mathrm{Pa}(i)$. Then $S^\star_{ji}>0$ for a non-edge, the margin condition fails, and the theorem gives no guarantee for that pair. When the structural readout $|W_{j,i}|$ is used, $\epsilon_{\mathrm{readout}}$ is the gap between $|W_{j,i}|$ and the ablation score, which we do not bound; for that readout the theorem states a condition rather than a guarantee.

\section{Experiments}
\label{sec:experiments}

\subsection{Experimental Setup}
\label{sec:exp_setup}

\paragraph{Datasets.} Our benchmarks serve three evaluation roles. NC8, simulated EEG, and ROI-level simulated fMRI provide known graphs for controlled recovery evaluation. The voxel-level fMRI simulation provides a larger known graph for runtime and recovery studies; it also contains latent regional drivers. LargeST provides real traffic data for computational feasibility, forecasting, and spatial-plausibility diagnostics, without complete directed causal ground truth. Dataset statistics and split protocols appear in Appendix~\ref{app:dataset_overview}.

\paragraph{Baselines.} We compare six temporal discovery methods. UnCLe uses uncoupled temporal modeling and perturbation-based scoring~\citep{bi2025uncle}; PCMCI uses conditional-independence testing~\citep{runge2019detecting}; TCDF combines temporal convolution, attention, and validation~\citep{nauta2019causal}. GVAR uses self-explaining autoregressive coefficients~\citep{marcinkevivcs2021interpretable}, cMLP uses structured sparsity in component-wise predictors~\citep{tank2021neural}, and CUTS+ uses coarse-to-fine graph discovery for high-dimensional series~\citep{cheng2024cuts+}. Methods define temporal context differently (an observed-history order, a latent autoregressive order, or a convolution kernel); Table~\ref{tab:baseline_temporal} lists each setting and how it was chosen, and Appendix~\ref{app:baseline_hparams} gives all configurations. The 1{,}000-variable study compares against UnCLe and CUTS+, which output continuous scores and save intermediate checkpoints and can therefore be evaluated at matched wall-clock budgets (Appendix~\ref{app:voxel_fmri_results}).

\paragraph{Evaluation metrics.}
For datasets with known ground-truth graphs, we evaluate causal discovery using AUROC and AUPRC as threshold-independent metrics.
We additionally report F1 and SHD for thresholded graphs in the appendix.
Unless otherwise stated, self-edges are excluded from graph-level evaluation because they correspond to autoregressive self-dependencies rather than cross-variable causal discovery.
For real-world LargeST experiments, where a definitive causal graph is unavailable, we report runtime, peak memory, forecasting error, and qualitative learned-graph visualizations rather than causal AUROC or AUPRC.

\paragraph{Runtime and implementation protocol.}
Runtime is measured as wall-clock time under a fixed hardware and software environment.
Unless otherwise stated, runtime includes model training and causal score extraction, but excludes dataset preprocessing and figure generation.
For high-dimensional experiments, we additionally record peak GPU memory when available.
A method is marked as OOM if it fails under the specified memory budget, and as timeout if it does not complete within the fixed wall-clock limit.
Detailed hyperparameters, compute resources, thresholding rules, and runtime measurement details are provided in Appendix~\ref{app:baseline_hparams} and Appendix~\ref{app:additional_runtime_metrics}.

\subsection{Causal Recovery on Controlled Benchmarks}
\label{sec:controlled_recovery}

\begin{table}[t]
\centering
\caption{
Causal discovery performance on low- and medium-dimensional temporal benchmarks with known ground-truth graphs.
We report AUROC and AUPRC as threshold-independent metrics.
Results are shown as mean $\pm$ 95\% confidence interval.
Best results are shown in \textbf{bold} and second-best results are \underline{underlined}.
}
\label{tab:low_medium_causal_recovery}
\resizebox{\textwidth}{!}{
\begin{tabular}{lcccccc}
\toprule
\multirow{2}{*}{Method}
& \multicolumn{2}{c}{NC8}
& \multicolumn{2}{c}{Simulated EEG}
& \multicolumn{2}{c}{Simulated fMRI} \\
\cmidrule(lr){2-3}
\cmidrule(lr){4-5}
\cmidrule(lr){6-7}
& AUROC $\uparrow$ & AUPRC $\uparrow$
& AUROC $\uparrow$ & AUPRC $\uparrow$
& AUROC $\uparrow$ & AUPRC $\uparrow$ \\
\midrule

UnCLe
& \textbf{0.947 $\pm$ 0.014}
& \textbf{0.756 $\pm$ 0.080}
& 0.571 $\pm$ 0.008
& 0.155 $\pm$ 0.008
& \underline{0.661 $\pm$ 0.012}
& \underline{0.268 $\pm$ 0.006} \\

PCMCI
& 0.909 $\pm$ 0.003
& 0.594 $\pm$ 0.022
& \underline{0.743 $\pm$ 0.012}
& \underline{0.390 $\pm$ 0.018}
& 0.569 $\pm$ 0.012
& 0.160 $\pm$ 0.010 \\

TCDF
& 0.740 $\pm$ 0.019
& 0.415 $\pm$ 0.020
& 0.685 $\pm$ 0.005
& 0.224 $\pm$ 0.007
& 0.585 $\pm$ 0.019
& 0.178 $\pm$ 0.020 \\

GVAR
& 0.750 $\pm$ 0.013
& 0.413 $\pm$ 0.041
& \textbf{0.750 $\pm$ 0.004}
& 0.344 $\pm$ 0.019
& 0.619 $\pm$ 0.017
& 0.219 $\pm$ 0.019 \\

CUTS+
& 0.795 $\pm$ 0.025
& 0.354 $\pm$ 0.035
& 0.589 $\pm$ 0.013
& 0.160 $\pm$ 0.007
& 0.532 $\pm$ 0.009
& 0.141 $\pm$ 0.009 \\

cMLP
& 0.753 $\pm$ 0.002
& 0.524 $\pm$ 0.006
& 0.656 $\pm$ 0.014
& 0.258 $\pm$ 0.024
& 0.624 $\pm$ 0.013
& 0.240 $\pm$ 0.020 \\

Ours
& \underline{0.916 $\pm$ 0.015}
& \underline{0.704 $\pm$ 0.035}
& 0.717 $\pm$ 0.037
& \textbf{0.476 $\pm$ 0.067}
& \textbf{0.867 $\pm$ 0.006}
& \textbf{0.728 $\pm$ 0.019} \\

\bottomrule
\end{tabular}
}
\end{table}

Table~\ref{tab:low_medium_causal_recovery} shows competitive but not uniformly superior recovery. On NC8, SCDM ranks second to UnCLe in both AUROC and AUPRC. On simulated EEG, it achieves the highest AUPRC, whereas GVAR and PCMCI achieve higher AUROC. On simulated fMRI, SCDM achieves the highest AUROC and AUPRC among the compared methods. The comparison shows that sharing one predictor and one causal-strength matrix across targets does not cost recovery accuracy at moderate scale; the gains of SCDM appear at larger scale (Sections~\ref{sec:voxel_fmri_scalability} and~\ref{sec:largest_scalability}).

\begin{figure*}[t]
\centering
\includegraphics[width=0.98\textwidth]{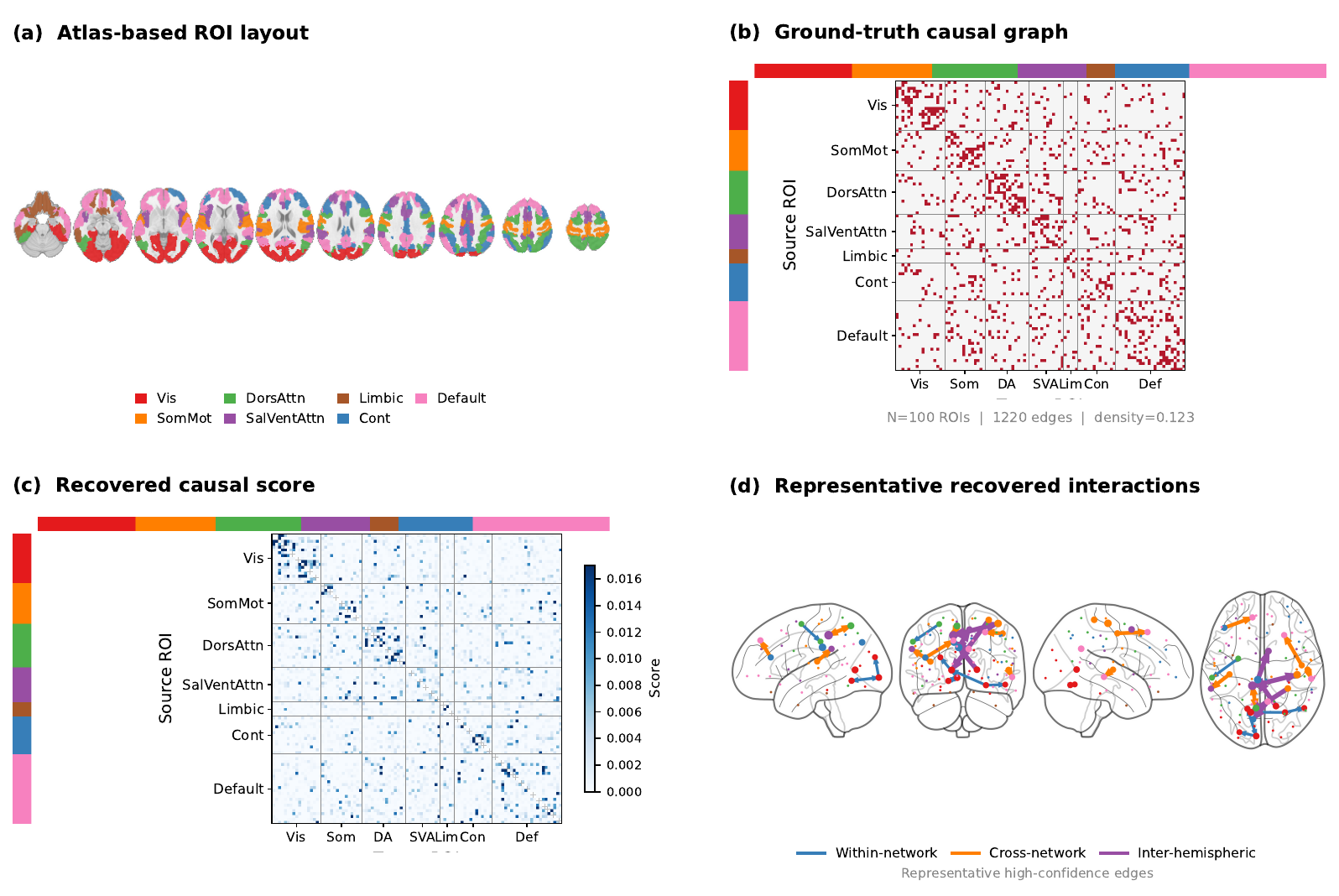}
\caption{Qualitative causal recovery on the atlas-grounded simulated fMRI benchmark. 
(a) Schaefer-100 ROI layout with Yeo-network labels. 
(b) Network-sorted ground-truth directed graph. 
(c) Continuous causal scores recovered by SCDM, shown in the same source--target ordering as (b). 
(d) Representative high-confidence recovered interactions mapped back to ROI space.}
\label{fig:fmri_qualitative}
\end{figure*}

Figure~\ref{fig:fmri_qualitative} shows the ground truth and the recovered scores in the same network-sorted source--target order, so one can check whether the recovered scores keep the network-organized structure of the simulation; panel (d) maps representative recovered interactions back to ROI space.

\subsection{High-Dimensional Scalability on Synthetic Voxel-level fMRI}
\label{sec:voxel_fmri_scalability}

To test scalability beyond ROI-level benchmarks, we evaluate SCDM on a synthetic voxel-level fMRI benchmark with $d=1000$ observed variables, $999{,}000$ off-diagonal candidate directed relations, and $7{,}142$ true edges (density $0.71\%$). Its known observed graph allows recovery evaluation at this scale. The simulator also includes latent regional drivers, so this experiment is an empirical scalability test rather than a direct test of the fully observed recovery theorem. Because a random ranking has an AUPRC equal to the edge density, we report AUPRC as enrichment over random. We compare against UnCLe and CUTS+, which output continuous scores and save intermediate checkpoints, at matched wall-clock budgets.

\begin{table}[t]
\centering
\small
\caption{Fixed-budget results on voxel-level fMRI ($d=1000$; mean over three seeds). Each entry is AUROC / AUPRC enrichment over random (random AUPRC $=0.0071$). At each budget we evaluate the latest checkpoint finished within that budget, so equal values at consecutive budgets mean that no new checkpoint finished in between. SCDM uses the structural readout $|W_{j,i}|$. Best value per budget in \textbf{bold}.}
\label{tab:voxel_fmri_fixed_budget}
\setlength{\tabcolsep}{5pt}
\begin{tabular}{@{}lcccc@{}}
\toprule
Method & 200\,s & 400\,s & 600\,s & 800\,s \\
\midrule
UnCLe (lag 1) & \textbf{0.573} / \textbf{2.05}$\times$ & 0.556 / 1.85$\times$ & 0.556 / 1.85$\times$ & 0.541 / 1.53$\times$ \\
CUTS+ ($L=3$) & 0.571 / 1.31$\times$ & 0.563 / 1.27$\times$ & 0.577 / 1.33$\times$ & 0.586 / 1.36$\times$ \\
SCDM & 0.566 / 1.96$\times$ & \textbf{0.566} / \textbf{1.96}$\times$ & \textbf{0.605} / \textbf{2.47}$\times$ & \textbf{0.645} / \textbf{4.01}$\times$ \\
\bottomrule
\end{tabular}
\end{table}

Table~\ref{tab:voxel_fmri_fixed_budget} shows that SCDM improves steadily with the budget and reaches 0.645 AUROC and $4.01\times$ enrichment at 800 seconds, compared with 0.586 and $1.36\times$ for CUTS+, which is designed for high-dimensional causal discovery, and 0.541 and $1.53\times$ for UnCLe. At 200 seconds UnCLe is slightly ahead on both metrics and CUTS+ on AUROC; from 400 seconds on, SCDM leads on both metrics, and its margin grows with the budget. At this scale we use the structural readout, because the source-isolated readout requires $O(d^2)$ evaluations of the trained predictor (Appendix~\ref{app:baseline_hparams}). Appendix~\ref{app:voxel_fmri_results} plots the same results and explains why the other baselines do not fit this fixed-budget protocol.

\subsection{Real-World LargeST Scalability}
\label{sec:largest_scalability}

We evaluate SCDM on LargeST as a real-world traffic benchmark for scalability, forecasting consistency, and qualitative structural plausibility. Since LargeST lacks a complete directed causal ground-truth graph, we do not claim exact causal recovery or report causal AUROC/AUPRC.

Figure~\ref{fig:largest_gla_causal_visualization} shows that the high-scoring directed relations are spatially coherent: as training proceeds, they concentrate on nearby freeway sensors instead of arbitrary long-range pairs. This is a spatial-plausibility check, not evidence that individual edges are causal.

\begin{figure*}[t]
    \centering
    \includegraphics[width=\textwidth]{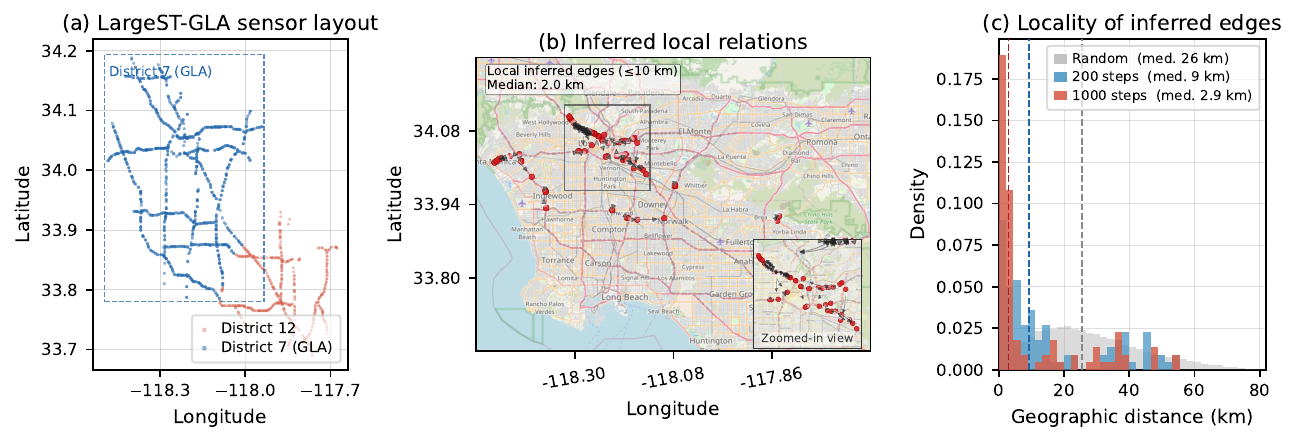}
    \caption{Spatial-plausibility diagnostic on LargeST-GLA. (a) Locations of the sensors used in this experiment; this panel shows geography only. (b) Representative high-scoring directed relations from SCDM. Each arrow points from the inferred source sensor to the inferred target sensor; only relations shorter than 10\,km are drawn to reduce clutter, and arrow curvature is for readability only. (c) Distribution of the geographic distance between the two sensors of each top-scoring relation. The median distance falls from about 26\,km for random sensor pairs to 9\,km after 200 training steps and 2.9\,km after 1{,}000 steps, so high-scoring relations become more local as training proceeds.}
\label{fig:largest_gla_causal_visualization}
\end{figure*}

\begin{table}[t]
\centering
\small
\caption{Feasibility on LargeST-GLA-3000 (3{,}000 sensors, $8{,}997{,}000$ candidate pairs). One A100 80\,GB GPU per run, 80-minute limit, three seeds. A run counts as completed only if it writes directed scores for all candidate pairs. Time to outcome is reported as mean $\pm$ standard deviation (SD) across three seeds.}
\label{tab:gla3000_feasibility}
\begin{tabular}{@{}lccc@{}}
\toprule
Method & Outcome & Time to outcome (s) & Peak GPU (GB) \\
\midrule
SCDM & Completed & $56.96 \pm 4.23$ & 6.9 \\
UnCLe & Out of memory & $349.25 \pm 13.30$ & 15.6 \\
CUTS+ & Time limit & $>4800$ & 72.7 \\
JRNGC & Time limit & $>4800$ & 14.3 \\
cMLP & Time limit & $>4800$ & 38.1 \\
\bottomrule
\end{tabular}
\end{table}

Table~\ref{tab:gla3000_feasibility} tests whether each method can run on the 3{,}000-sensor subset, which has nearly nine million candidate pairs.
SCDM finishes training and graph extraction in 56.96\,s with 6.9\,GB peak GPU memory. UnCLe writes an intermediate checkpoint after its first training phase and then runs out of memory in its joint phase; CUTS+, JRNGC~\citep{zhou2024jacobian}, and cMLP start normally but do not write a completed score matrix within 80 minutes. This test measures whether a method can run at this scale, not causal accuracy. Appendix~\ref{app:largest_forecasting_results} adds a graph-conditioned forecasting diagnostic on LargeST-GLA-1500 (Table~\ref{tab:largest_graph_forecasting_aux}).

\subsection{Ablation Study}
\label{sec:ablation}

We run two ablations, one for each training component described in Section~\ref{sec:meta-method}.

\paragraph{Target-task subsampling.} All variants use the same full method, dataset, optimizer, number of outer updates, and graph-score evaluation; only the number of target tasks $B$ sampled per meta-episode changes. We report training time only, because the post-training readout is identical across variants and does not measure the cost of meta-training. A smaller per-update cost could in principle be offset by needing more updates, so we fix the number of outer updates and compare total training time. Table~\ref{tab:fmri_task_batch_ablation} shows that $B=32$ cuts total training time by $3.1\times$ relative to all-target updates ($B=100$) and also gives the best recovery.

\begin{table}[t]
\centering
\caption{Task-batch-size ablation on the 100-variable simulated fMRI benchmark.
All variants use the full method with the structural readout and differ only in the number of target tasks per
meta-episode. Results are mean $\pm$ 95\% CI over five seeds; runtime reports
training time only.}
\label{tab:fmri_task_batch_ablation}
\small
\setlength{\tabcolsep}{3.2pt}
\resizebox{\columnwidth}{!}{%
\begin{tabular}{@{}lcccccc@{}}
\toprule
\textbf{Task batch $B$}
& \textbf{AUROC $\uparrow$}
& \textbf{AUPRC $\uparrow$}
& \textbf{F1 $\uparrow$}
& \textbf{SHD $\downarrow$}
& \textbf{Train (s) $\downarrow$}
& \textbf{Speedup $\uparrow$} \\
\midrule
10
& $0.819 \pm 0.007$
& $0.656 \pm 0.017$
& $0.541 \pm 0.030$
& $841.2 \pm 30.9$
& $\mathbf{226.9 \pm 2.5}$
& $9.0{\times}$ \\
32
& $\mathbf{0.867 \pm 0.006}$
& $\mathbf{0.728 \pm 0.019}$
& $\mathbf{0.675 \pm 0.025}$
& $\mathbf{770.6 \pm 22.2}$
& $668.4 \pm 7.2$
& $3.1{\times}$ \\
64
& $0.822 \pm 0.007$
& $0.680 \pm 0.012$
& $0.588 \pm 0.015$
& $785.2 \pm 14.9$
& $1301.0 \pm 24.8$
& $1.6{\times}$ \\
100
& $0.747 \pm 0.008$
& $0.552 \pm 0.015$
& $0.392 \pm 0.033$
& $1020.8 \pm 29.9$
& $2049.9 \pm 13.3$
& $1.0{\times}$ \\
\bottomrule
\end{tabular}
}
\end{table}

\paragraph{Support/query meta-objective.} Subsampling could also be combined with ordinary multi-task training, so the ablation above does not show what the support/query objective adds. We hold out a subset of target variables from training and train three variants with the same architecture, data, initialization, and budget: full SCDM, an episodic variant without inner adaptation ($K=0$), and pooled multi-task training on sampled targets (Joint MTL). All three then adapt to each held-out target with the same test-time protocol (Appendix~\ref{app:heldout}). SCDM reaches a post-adaptation query MSE of 3.782, which is 12.1\% lower than Joint MTL (4.305) and 31.1\% lower than $K=0$ (5.487). Because sampling, data, and test-time adaptation are identical, the gap comes from the training objective. The shared matrix $W$ also matters for recovery: removing it lowers AUPRC from 0.728 to 0.390 (Appendix~\ref{app:component_ablation_schaefer100}).

\paragraph{Readout choice.} Applied to the same trained models (Table~\ref{tab:readout_ablation}), the structural readout ranks edges better on simulated fMRI (AUPRC 0.728 vs.\ 0.693), so most of the ranking information is already stored in $W$, while the source-isolated readout is better on simulated EEG (0.476 vs.\ 0.426). 

\paragraph{Stress tests.} In NC8-Collinear (Appendix~\ref{app:collinear}), graph-level AUPRC stays between 0.741 and 0.965 as a non-parent becomes correlated with a true parent (correlation up to 0.89), but the non-edge outranks the true edge in 5 of 30 runs. With four of the 100 variables in simulated fMRI and EEG withheld, a latent-augmented variant keeps observed-block AUPRC at 0.692 and 0.470 (Appendix~\ref{app:latent_results}).

\section{Conclusion}
\label{sec:conclusion}

We introduced SCDM, a shared-parameter meta-learning framework for scalable causal discovery in nonlinear temporal systems. By treating target-variable prediction as a collection of related tasks, SCDM learns a shared temporal predictor and causal-strength matrix while updating only a subset of target tasks per meta-episode. Ablations show that target-task subsampling lowers training cost without hurting recovery and that the support/query objective gives shared parameters that adapt better than pooled multi-task training. SCDM is competitive on controlled benchmarks and gives a better runtime--accuracy tradeoff at 1{,}000 and 3{,}000 variables. SCDM remains an observational method with clear limits. Its recovery result assumes no hidden common causes, a correct lag order, and no instantaneous effects; a hidden driver of several observed variables can create spurious edges, and our latent-variable experiments (Appendix~\ref{app:latent_results}) are a controlled diagnostic rather than a general solution. Individual edge scores can also move to correlated non-parents when sources are nearly redundant (Appendix~\ref{app:collinear}). Future work will study adaptive lag selection, adaptive task sampling, and explicit modeling of hidden confounders.

\begin{ack}
We thank the anonymous reviewers and the area chair for their constructive comments and suggestions, which improved the clarity of the method description, the design of the ablation studies, and the discussion of assumptions and limitations. Computing resources were provided by the University of Tennessee at Chattanooga. The authors declare no competing interests.
\end{ack}

\bibliographystyle{plainnat}
\bibliography{ref}

\newpage

\appendix

\section*{Notation}
\begin{table}[h]
\centering
\small
\caption{Notation used throughout the paper.}
\label{tab:scdm_notation}
\begin{tabularx}{\linewidth}{@{}lX@{}}
\toprule
Symbol & Definition \\
\midrule
$T$ & Number of time points \\
$d$ & Number of observed variables, which is also the number of target tasks \\
$p$ & Observed-history order: the input at time $t$ is $x_{t-1},\ldots,x_{t-p}$ \\
$\ell$ & Lag index, $\ell=1,\ldots,p$ \\
$\loss(\cdot,\cdot)$ & Pointwise prediction loss (squared error in all experiments) \\
$\theta$ & Shared temporal predictor parameters \\
$W\in\mathbb{R}^{d\times d}$ & Shared causal-strength matrix; $W_{j,i}$ is the strength from source $j$ to target $i$, and column $W[:,i]$ gates all sources of target $i$ \\
$r$ & Number of components per edge in $W$ ($r=1$ unless stated) \\
$\widetilde H^{(i)}_t$ & Weighted lagged history of target $i$, with entries $W_{j,i}x^{(j)}_{t-\ell}$ \\
$B$ & Number of target tasks sampled per meta-episode \\
$K$ & Number of inner (support) adaptation steps \\
$N$ & Number of outer meta-updates \\
$L_{\mathrm{sup}},L_{\mathrm{qry}}$ & Lengths of the support and query windows \\
$\alpha,\beta$ & Inner and outer learning rates \\
$\lambda_W$ & Sparsity coefficient on $W[:,i]$, shared by the support and query losses \\
$\mathrm{Pa}(i)$ & Set of true lagged parents of target $i$ \\
$S^\star_{ji}$ & Population ablation score: increase in the optimal risk of target $i$ when source $j$ is removed \\
$\Delta_{\min}$ & Smallest $S^\star_{ji}$ over true edges \\
$s_{j\to i}$ & Implemented edge score from source $j$ to target $i$ \\
$\epsilon_{\mathrm{stat}},\epsilon_{\mathrm{meta}},\epsilon_{\mathrm{readout}}$ & Statistical, meta-optimization, and readout errors in Theorem~\ref{thm:finite_error_graph_recovery} \\
\bottomrule
\end{tabularx}
\end{table}

\section{Theoretical Analysis}
\label{app:theory}

This appendix provides the proof of Theorem~\ref{thm:finite_error_graph_recovery}. The analysis is intentionally restricted to the fully observed non-latent setting. It does not claim recovery under hidden confounding, instantaneous effects, incorrect lag specification, or misspecified predictors. The purpose of the result is to formalize a finite-error recovery principle: if the learned score is uniformly close to a population risk-gap score, then thresholding recovers the delayed causal graph.

\subsection{Population Target and Assumptions}
\label{app:theory_assumptions}

Let \(X_t=(X_t^{(1)},\ldots,X_t^{(d)})\) be a stationary multivariate temporal process and let \(H_t=[X_{t-1},\ldots,X_{t-p}]\in\mathbb{R}^{d\times p}\) denote the lagged history. For each target \(i\), the parent set \(\mathrm{Pa}(i)\subseteq\{1,\ldots,d\}\setminus\{i\}\) contains source variables whose lagged histories directly influence \(X_t^{(i)}\). For the pointwise loss \(\loss(\cdot,\cdot)\), define
\[
R_i(f)=\mathbb{E}[\loss(f(H_t),X_t^{(i)})],
\qquad
R_i^\star=\inf_{f\in\mathcal{F}_i}R_i(f).
\]
For a candidate source \(j\), let \(H_{t,-j}\) be the lagged history with all lags of source \(j\) removed, and define the restricted optimal risk
\[
R_{i,-j}^\star=\inf_{f\in\mathcal{F}_{i,-j}}
\mathbb{E}[\loss(f(H_{t,-j}),X_t^{(i)})].
\]
The population ablation score is
\[
S_{ji}^\star=R_{i,-j}^\star-R_i^\star.
\]

We group the assumptions into four categories: identifiability, statistical concentration, meta-optimization, and readout fidelity.

\begin{assumption}[Delayed-system identifiability]
\label{assump:delayed_identifiability}
The process is fully observed and follows a delayed temporal causal graph with no instantaneous effects.
The learner's lag order covers the true maximum causal lag.
Conditional on the lagged histories of its parents, each target is independent of non-parent lagged histories.
The predictor class is rich enough to represent the population-optimal delayed conditional predictor.
Finally, the population ablation score has a positive margin: there exists \(\Delta_{\min}>0\) such that
\(S^\star_{ji}\ge \Delta_{\min}\) for every \(j\in\mathrm{Pa}(i)\), and \(S^\star_{ji}=0\) for every
\(j\notin\mathrm{Pa}(i)\).
\end{assumption}

\begin{assumption}[Uniform concentration under temporal dependence]
\label{assump:uniform_concentration}
The temporal process is stationary and sufficiently mixing.
The loss-composed full and source-restricted predictor classes have finite sequential complexity
\(\mathcal{C}(\mathcal{F})\), yielding a uniform concentration bound with effective sample size
\(T_{\mathrm{eff}}\).
\end{assumption}

\begin{assumption}[Meta-optimization residual]
\label{assump:meta_residual}
After \(N\) outer iterations, task batch size \(B\), and \(K\) inner adaptation steps, the SCDM
meta-learned empirical score \(\widehat S^{\mathrm{meta}}_{ji,T,N}\) satisfies
\[
\max_{i,j}
\left|
\widehat S^{\mathrm{meta}}_{ji,T,N}
-
\widehat S^{\mathrm{abl}}_{ji,T}
\right|
\le
\epsilon_{\mathrm{meta}}(N,B,K),
\]
where \(\widehat S^{\mathrm{abl}}_{ji,T}\) is the ideal empirical ablation score.
This term captures finite episodic meta-training, target-task subsampling, finite support adaptation,
and the first-order meta-update.
We use existing MAML and first-order MAML convergence analyses only to motivate this residual form;
we do not claim a new meta-optimization rate.
\end{assumption}

\begin{assumption}[Readout residual]
\label{assump:readout_residual}
Let \(\widetilde S_{ji,T,N}\) be the implemented post-training source-isolated score.
Its discrepancy from the meta-learned empirical ablation score is bounded by
\[
\max_{i,j}
\left|
\widetilde S_{ji,T,N}
-
\widehat S^{\mathrm{meta}}_{ji,T,N}
\right|
\le
\epsilon_{\mathrm{readout}}(T,N).
\]
This residual records the approximation introduced by using a scalable post-training readout instead
of retraining a restricted predictor for every ordered pair.
\end{assumption}

\subsection{Statistical, Meta-optimization, and Readout Errors}
\label{app:theory_error_decomp}

We first control the gap between the ideal empirical ablation score and the population ablation score. Let $\widehat{R}_{i,T}(f)=T'^{-1}\sum_{t=p+1}^{T}\loss(f(H_t),X_t^{(i)})$, where $T'=T-p$. Similarly define $\widehat{R}_{i,-j,T}(f)=T'^{-1}\sum_{t=p+1}^{T}\loss(f(H_{t,-j}),X_t^{(i)})$. The ideal empirical risks are $\widehat{R}_{i,T}^{\star}=\inf_{f\in\mathcal{F}_i}\widehat{R}_{i,T}(f)$ and $\widehat{R}_{i,-j,T}^{\star}=\inf_{f\in\mathcal{F}_{i,-j}}\widehat{R}_{i,-j,T}(f)$, and the ideal empirical ablation score is $\widehat{S}^{\mathrm{abl}}_{ji,T}=\widehat{R}_{i,-j,T}^{\star}-\widehat{R}_{i,T}^{\star}$.

\begin{lemma}[Uniform convergence of ablation scores]
\label{lem:uniform_ablation}
Under Assumption~\ref{assump:uniform_concentration}, with probability at least \(1-\delta\),
\[
\max_{i,j}\left|\widehat{S}^{\mathrm{abl}}_{ji,T}-S_{ji}^{\star}\right|\le \epsilon_{\mathrm{stat}}(T,d,\delta),\qquad
\epsilon_{\mathrm{stat}}(T,d,\delta)=C\sqrt{\frac{\mathcal{C}(\mathcal{F})+\log(2d^2/\delta)}{T_{\mathrm{eff}}}} .
\]
Here $C$ is a constant depending on the loss scale and mixing constants. In the simplified bounded-complexity and geometrically mixing case, this reduces to $\epsilon_{\mathrm{stat}}(T,d,\delta)=O(\sqrt{\log(d^2/\delta)/T})$.
\end{lemma}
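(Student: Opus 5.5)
The plan is to reduce the score deviation to uniform deviations of empirical risks from population risks over the full and source-restricted predictor classes, and then apply the concentration bound of Assumption~\ref{assump:uniform_concentration} with a union bound over the relevant classes.

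\textbf{Step 1 (infimum comparison).} For any class \(\mathcal{G}\) and any two functionals \(A,B\) on \(\mathcal{G}\),
\[
\bigl|\inf_{\mathcal{G}}A-\inf_{\mathcal{G}}B\bigr|\le\sup_{\mathcal{G}}|A-B|.
\]
Applying this with \(\mathcal{G}=\mathcal{F}_i\) gives
\[
|\widehat R^\star_{i,T}-R^\star_i|\le \sup_{f\in\mathcal{F}_i}|\widehat R_{i,T}(f)-R_i(f)|=:D_i .
\]
Similarly, with \(\mathcal{G}=\mathcal{F}_{i,-j}\) we get \(|\widehat R^\star_{i,-j,T}-R^\star_{i,-j}|\le D_{i,-j}\). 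Since \(\widehat S^{\mathrm{abl}}_{ji,T}-S^\star_{ji}\) is the difference of these two errors, the triangle inequality gives
\[
|\widehat S^{\mathrm{abl}}_{ji,T}-S^\star_{ji}|\le D_i+D_{i,-j}.
\]
No optimization enters here, because the ideal empirical score is defined through exact infima.

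\textbf{Step 2 (concentration per class).} Assumption~\ref{assump:uniform_concentration} will be read as follows. For each fixed loss-composed class \(\mathcal{G}\) (full or restricted) and each \(\delta'\in(0,1)\), with probability at least \(1-\delta'\),
\[
\sup_{\mathcal{G}}|\widehat R-R|\le c\sqrt{\bigl(\mathcal{C}(\mathcal{F})+\log(1/\delta')\bigr)/T_{\mathrm{eff}}}.
\]
This is the standard form for \(\beta\)-mixing sequences. It follows from a blocking argument: split the \(T'\) indices into alternating blocks and replace the dependent blocks by independent copies, paying a mixing coefficient term. Sequential or Rademacher complexity then controls the expected supremum, and a bounded-differences inequality handles the tail. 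The blocking reduces the sample size to \(T_{\mathrm{eff}}\), and the mixing penalty is absorbed into the constant. I take this as given by the assumption rather than reprove it.

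\textbf{Step 3 (union bound).} There are at most \(d\) full classes and \(d(d-1)\) restricted classes, so at most \(d^2\) events, and each must hold for the two-sided deviation. Setting \(\delta'=\delta/(2d^2)\) and taking a union bound, all the events hold simultaneously with probability at least \(1-\delta\). On that event, Step 1 gives a uniform bound over all \((i,j)\):
\[
\max_{i,j}|\widehat S^{\mathrm{abl}}_{ji,T}-S^\star_{ji}|\le 2c\sqrt{\bigl(\mathcal{C}(\mathcal{F})+\log(2d^2/\delta)\bigr)/T_{\mathrm{eff}}}.
\]
Taking \(C=2c\) yields \(\epsilon_{\mathrm{stat}}\) exactly as stated.

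\textbf{Step 4 (simplified rate).} Under geometric \(\beta\)-mixing, one can choose blocks of length \(O(\log T)\), or a constant length for bounded complexity up to logarithmic factors absorbed into \(C\). This gives \(T_{\mathrm{eff}}\asymp T\) up to constants, and with \(\mathcal{C}(\mathcal{F})=O(1)\) the bound becomes \(O\bigl(\sqrt{\log(d^2/\delta)/T}\bigr)\).

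The main obstacle is Step 2: how much of the dependent-data uniform law is actually supplied by the assumption. The bound must hold uniformly over a class of predictors of the lagged history, and the summands \(\loss(f(H_t),X^{(i)}_t)\) are dependent through overlapping histories. I would therefore state the blocking reduction explicitly, noting that the \(p\)-lag window preserves mixing with a shift of \(p\) in the coefficients, and require a bounded loss or a truncation argument. The constants \(C\) and \(T_{\mathrm{eff}}\) then absorb the block length and the mixing tail.
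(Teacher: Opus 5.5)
Your proposal is correct and follows the paper's proof essentially step for step. The triangle inequality splits the score error into the full-risk and restricted-risk errors, the infimum comparison bounds each by a uniform deviation, the assumed mixing concentration controls those deviations, and a union bound with $\delta'=\delta/(2d^2)$ gives the stated $\epsilon_{\mathrm{stat}}$. Two small remarks: your event count ($d$ full classes plus $d(d-1)$ restricted ones) is slightly tidier than the paper's union bound over ordered pairs; and in Step~4 an $O(\log T)$ block length gives $T_{\mathrm{eff}}\asymp T/\log T$, a factor that cannot be absorbed into a constant, although the paper also only asserts the simplified rate.
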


\begin{proof}
For any fixed pair $(j,i)$,
\[
\left|\widehat{S}^{\mathrm{abl}}_{ji,T}-S_{ji}^{\star}\right|
\le
\left|\widehat{R}^{\star}_{i,-j,T}-R^{\star}_{i,-j}\right|
+
\left|\widehat{R}^{\star}_{i,T}-R^{\star}_{i}\right|.
\]
The first term is controlled by the uniform deviation over $\mathcal{F}_{i,-j}$:
\[
\left|\widehat{R}^{\star}_{i,-j,T}-R^{\star}_{i,-j}\right|
\le
\sup_{f\in\mathcal{F}_{i,-j}}
\left|\widehat{R}_{i,-j,T}(f)-R_{i,-j}(f)\right|.
\]
The same argument gives $\left|\widehat{R}^{\star}_{i,T}-R^{\star}_{i}\right|\le \sup_{f\in\mathcal{F}_{i}}\left|\widehat{R}_{i,T}(f)-R_i(f)\right|$. By the mixing concentration assumption and the finite complexity of the
loss-composed classes, each uniform deviation is bounded by a term of order
\(\sqrt{(\mathcal{C}(\mathcal{F})+\log(1/\delta'))/T_{\mathrm{eff}}}\).
Applying a union bound over at most \(d(d-1)\le d^2\) ordered pairs and
absorbing constants yields the stated bound. The simplified rate follows under
bounded complexity and geometric mixing.
\end{proof}

The statistical bound above concerns the ideal empirical ablation score. The implemented method differs from this ideal object in two ways. First, the temporal predictor and causal-strength matrix are obtained by a finite number of meta-training iterations. Since each target variable is treated as a task, the shared initialization and shared temporal representation are optimized through support/query episodes. Finite $N$, finite task batch size $B$, finite inner adaptation $K$, and the first-order approximation all contribute to the residual
\[
\epsilon_{\mathrm{meta}}(N,B,K)=\max_{i,j}\left|\widehat{S}^{\mathrm{meta}}_{ji,T,N}-\widehat{S}^{\mathrm{abl}}_{ji,T}\right|.
\]
This term captures how closely the learned meta-predictor approximates the empirical full and restricted risk minimizers. Existing MAML convergence results support the view that such optimization residuals can decrease under standard smoothness, bounded-gradient, and step-size conditions~\citep{fallah2020convergence,chayti2024first}. We use this literature only to motivate the residual form; our graph-recovery theorem does not require claiming a new rate for the particular first-order implementation.

Second, the implemented graph extraction uses a post-training perturbation readout rather than explicitly retraining a restricted model for every ordered pair. We therefore define
\[
\epsilon_{\mathrm{readout}}(T,N)=\max_{i,j}\left|\widetilde{S}_{ji,T,N}-\widehat{S}^{\mathrm{meta}}_{ji,T,N}\right|.
\]
This term measures how faithfully the perturbation procedure approximates the ideal source-ablation score. When the trained predictor is well specified and the perturbation operation faithfully removes source-specific historical information, this discrepancy vanishes in the infinite-data and fully optimized limit. In finite samples, it is a practical residual controlled jointly by the quality of statistical estimation, meta-optimization, and score normalization.

\begin{lemma}[Error decomposition]
\label{lem:error_decomp}
Under Assumptions~\ref{assump:uniform_concentration},
\ref{assump:meta_residual}, and~\ref{assump:readout_residual},
with probability at least \(1-\delta\),
\[
\max_{i,j}\left|\widetilde{S}_{ji,T,N}-S_{ji}^{\star}\right|
\le
\epsilon_{\mathrm{stat}}(T,d,\delta)+\epsilon_{\mathrm{meta}}(N,B,K)+\epsilon_{\mathrm{readout}}(T,N).
\]
\end{lemma}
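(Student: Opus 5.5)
The plan is a direct triangle inequality along the chain of intermediate scores already set up in this appendix. For each ordered pair \((j,i)\) I split the total deviation into three pieces:
\[
\bigl|\widetilde S_{ji,T,N}-S^\star_{ji}\bigr|
\le
\bigl|\widetilde S_{ji,T,N}-\widehat S^{\mathrm{meta}}_{ji,T,N}\bigr|
+\bigl|\widehat S^{\mathrm{meta}}_{ji,T,N}-\widehat S^{\mathrm{abl}}_{ji,T}\bigr|
+\bigl|\widehat S^{\mathrm{abl}}_{ji,T}-S^\star_{ji}\bigr|.
\]
I then take the maximum over \((i,j)\) on both sides. The maximum of a sum is at most the sum of the maxima, so the left-hand maximum is bounded by the sum of three separate maxima.

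Next I bound each maximum with an existing result.
\begin{itemize}
\item The first maximum is at most \(\epsilon_{\mathrm{readout}}(T,N)\) by Assumption~\ref{assump:readout_residual}.
\item The second is at most \(\epsilon_{\mathrm{meta}}(N,B,K)\) by Assumption~\ref{assump:meta_residual}.
\item The third is at most \(\epsilon_{\mathrm{stat}}(T,d,\delta)\) by Lemma~\ref{lem:uniform_ablation}, on an event of probability at least \(1-\delta\).
\end{itemize}
Summing the three bounds on that event gives the claimed inequality.

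The main point needing care is the probability bookkeeping. Only Lemma~\ref{lem:uniform_ablation} holds with probability \(1-\delta\). I read Assumptions~\ref{assump:meta_residual} and~\ref{assump:readout_residual} as deterministic statements about the realized training run. Under that reading, intersecting with them costs no extra probability and the final event has probability at least \(1-\delta\).

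If instead the residual assumptions are meant to hold only with high probability, the argument needs one more step. I would intersect their events with the statistical event and apply a union bound. To keep the stated \(1-\delta\), I would then reallocate the failure probability, for example by using \(\delta/3\) for each event. I would flag this as a remark rather than change the statement.
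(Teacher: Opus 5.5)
Your proposal is correct and matches the paper's proof: the same add-and-subtract of $\widehat S^{\mathrm{meta}}_{ji,T,N}$ and $\widehat S^{\mathrm{abl}}_{ji,T}$, the triangle inequality, the maximum over $(j,i)$, and then Assumption~\ref{assump:readout_residual}, Assumption~\ref{assump:meta_residual}, and Lemma~\ref{lem:uniform_ablation} to bound the three terms. Your probability bookkeeping also agrees with the paper, which states the meta and readout residual assumptions as deterministic bounds, so the only $1-\delta$ event comes from the statistical lemma.
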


\begin{proof}
For each ordered pair $(j,i)$, add and subtract $\widehat{S}^{\mathrm{meta}}_{ji,T,N}$ and $\widehat{S}^{\mathrm{abl}}_{ji,T}$:
\[
\left|\widetilde{S}_{ji,T,N}-S_{ji}^{\star}\right|
\le
\left|\widetilde{S}_{ji,T,N}-\widehat{S}^{\mathrm{meta}}_{ji,T,N}\right|
+
\left|\widehat{S}^{\mathrm{meta}}_{ji,T,N}-\widehat{S}^{\mathrm{abl}}_{ji,T}\right|
+
\left|\widehat{S}^{\mathrm{abl}}_{ji,T}-S_{ji}^{\star}\right|.
\]
Taking the maximum over all \((j,i)\) and applying
Lemma~\ref{lem:uniform_ablation},
Assumption~\ref{assump:meta_residual}, and
Assumption~\ref{assump:readout_residual} gives the result.
\end{proof}

\subsection{Proof of Finite-Error Graph Recovery}
\label{app:theory_recovery_proof}

\begin{theorem}[Finite-error graph recovery]
\label{thm:appendix_finite_error_recovery}
Suppose Assumptions~\ref{assump:delayed_identifiability},
\ref{assump:uniform_concentration}, \ref{assump:meta_residual},
and~\ref{assump:readout_residual} hold. Let $\epsilon_{\mathrm{tot}}=\epsilon_{\mathrm{stat}}(T,d,\delta)+\epsilon_{\mathrm{meta}}(N,B,K)+\epsilon_{\mathrm{readout}}(T,N)$. If $\epsilon_{\mathrm{tot}}<\Delta_{\min}/2$, then thresholding the practical score at any
\[
\gamma\in\left(\epsilon_{\mathrm{tot}},\Delta_{\min}-\epsilon_{\mathrm{tot}}\right)
\]
recovers the true graph with probability at least $1-\delta$: $\Pr(\widehat{A}_{T,N}=A^\star)\ge 1-\delta$. In particular, $\gamma=\Delta_{\min}/2$ is valid.
\end{theorem}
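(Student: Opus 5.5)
The plan is a direct sandwich argument on the good event supplied by Lemma~\ref{lem:error_decomp}. Let $\mathcal{E}$ be the event on which
\[
\max_{i,j}\bigl|\widetilde S_{ji,T,N}-S^\star_{ji}\bigr|\le\epsilon_{\mathrm{tot}} .
\]
By Lemma~\ref{lem:error_decomp}, which combines Lemma~\ref{lem:uniform_ablation} with Assumptions~\ref{assump:meta_residual} and~\ref{assump:readout_residual}, we have $\Pr(\mathcal{E})\ge 1-\delta$. All remaining steps are deterministic on $\mathcal{E}$. No further union bound is needed, because the uniform (max over ordered pairs) control is already built into $\epsilon_{\mathrm{stat}}$ through the $\log(2d^2/\delta)$ term.

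We define the thresholded estimate entrywise by $\widehat A_{T,N}[j,i]=\mathbbm{1}\{\widetilde S_{ji,T,N}>\gamma\}$ for $j\neq i$, with self-edges excluded as in the paper. The true adjacency is $A^\star[j,i]=\mathbbm{1}\{j\in\mathrm{Pa}(i)\}$. We then split into two cases using the margin from Assumption~\ref{assump:delayed_identifiability}.

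If $j\in\mathrm{Pa}(i)$, then $S^\star_{ji}\ge\Delta_{\min}$. On $\mathcal{E}$ this gives $\widetilde S_{ji,T,N}\ge\Delta_{\min}-\epsilon_{\mathrm{tot}}>\gamma$, so the edge is selected.

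If $j\notin\mathrm{Pa}(i)$, then $S^\star_{ji}=0$. On $\mathcal{E}$ this gives $\widetilde S_{ji,T,N}\le\epsilon_{\mathrm{tot}}<\gamma$, so the edge is rejected.

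Hence $\widehat A_{T,N}=A^\star$ on $\mathcal{E}$, and therefore $\Pr(\widehat A_{T,N}=A^\star)\ge\Pr(\mathcal{E})\ge1-\delta$.

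Two small points remain. First, the interval $(\epsilon_{\mathrm{tot}},\Delta_{\min}-\epsilon_{\mathrm{tot}})$ is nonempty exactly because $\epsilon_{\mathrm{tot}}<\Delta_{\min}/2$. Second, $\gamma=\Delta_{\min}/2$ lies in it, since $\epsilon_{\mathrm{tot}}<\Delta_{\min}/2<\Delta_{\min}-\epsilon_{\mathrm{tot}}$.

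The strict inequalities matter. Because the interval is open, ties cannot occur at the threshold, so the choice between a strict and non-strict comparison in the thresholding rule is immaterial.

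The only genuine subtlety, and the step I would check most carefully, is that the probability-$(1-\delta)$ event really is a single event controlling all pairs simultaneously. The meta and readout residuals are stated as deterministic bounds in Assumptions~\ref{assump:meta_residual}–\ref{assump:readout_residual}, so they hold on any event. If they were instead meant to hold only with some probability, I would intersect their events with the event of Lemma~\ref{lem:uniform_ablation} and adjust $\delta$ accordingly. Everything else is routine.
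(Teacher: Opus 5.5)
Your proposal is correct and follows the paper's own proof. Both arguments work on the single probability-$(1-\delta)$ event from Lemma~\ref{lem:error_decomp}, and both use the margin to place parent scores above $\Delta_{\min}-\epsilon_{\mathrm{tot}}>\gamma$ and non-parent scores below $\epsilon_{\mathrm{tot}}<\gamma$; your extra remarks (the interval is nonempty, ties cannot occur, self-edges are excluded, the residuals are deterministic) are consistent additions rather than a different route.
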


\begin{proof}
By Lemma~\ref{lem:error_decomp}, with probability at least $1-\delta$, $\max_{i,j}|\widetilde{S}_{ji,T,N}-S_{ji}^{\star}|\le\epsilon_{\mathrm{tot}}$. Consider a true parent $j\in\mathrm{Pa}(i)$. By
Assumption~\ref{assump:delayed_identifiability},
\(S^\star_{ji}\ge\Delta_{\min}\), and therefore $\widetilde{S}_{ji,T,N}\ge S_{ji}^{\star}-\epsilon_{\mathrm{tot}}\ge\Delta_{\min}-\epsilon_{\mathrm{tot}}>\gamma$, so the edge $j\to i$ is correctly included. Now consider a non-parent $j\notin\mathrm{Pa}(i)$. By
Assumption~\ref{assump:delayed_identifiability},
\(S^\star_{ji}=0\), and therefore $\widetilde{S}_{ji,T,N}\le S_{ji}^{\star}+\epsilon_{\mathrm{tot}}=\epsilon_{\mathrm{tot}}<\gamma$, so the edge $j\to i$ is correctly excluded. Since every ordered pair is classified correctly on the same high-probability event, the full graph is recovered with probability at least $1-\delta$.
\end{proof}

\subsection{Asymptotic Recovery and Interpretation}
\label{app:theory_interpretation}

The finite-error theorem immediately yields an asymptotic consistency statement.

\begin{corollary}[Asymptotic full-graph recovery]
\label{cor:asymptotic_recovery}
Under the assumptions of Theorem~\ref{thm:appendix_finite_error_recovery}, if $\epsilon_{\mathrm{stat}}(T,d,\delta_T)\to 0$, $\epsilon_{\mathrm{meta}}(N,B,K)\to 0$, and $\epsilon_{\mathrm{readout}}(T,N)\to 0$ as $T,N\to\infty$ with $\delta_T\to 0$, then $\Pr(\widehat{A}_{T,N}=A^\star)\to 1$.
\end{corollary}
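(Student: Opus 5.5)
The plan is to reduce Corollary~\ref{cor:asymptotic_recovery} directly to Theorem~\ref{thm:appendix_finite_error_recovery} by showing that, for all sufficiently large $(T,N)$, the hypothesis of that theorem is met with confidence level $\delta_T$. Write $\epsilon_{\mathrm{tot}}(T,N)=\epsilon_{\mathrm{stat}}(T,d,\delta_T)+\epsilon_{\mathrm{meta}}(N,B,K)+\epsilon_{\mathrm{readout}}(T,N)$. We fix $d$, $B$, $K$, and treat $N$ as a function of $T$, or as jointly tending to infinity with it. The three residuals each tend to $0$ by hypothesis, so $\epsilon_{\mathrm{tot}}(T,N)\to 0$. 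Since $\Delta_{\min}>0$ is a fixed property of the population process under Assumption~\ref{assump:delayed_identifiability}, there exist $T_0,N_0$ such that $\epsilon_{\mathrm{tot}}(T,N)<\Delta_{\min}/2$ whenever $T\ge T_0$ and $N\ge N_0$.

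For such $(T,N)$, I apply Theorem~\ref{thm:appendix_finite_error_recovery} with $\delta=\delta_T$. The theorem permits any threshold in $(\epsilon_{\mathrm{tot}},\Delta_{\min}-\epsilon_{\mathrm{tot}})$, and I commit to the fixed choice $\gamma=\Delta_{\min}/2$. This interval contains $\Delta_{\min}/2$ once $\epsilon_{\mathrm{tot}}<\Delta_{\min}/2$, so the threshold does not depend on unknown residuals, which is important for the estimator $\widehat A_{T,N}$ to be well defined along the sequence. The theorem then gives $\Pr(\widehat A_{T,N}=A^\star)\ge 1-\delta_T$. Letting $T\to\infty$ and using $\delta_T\to 0$ yields $\liminf\Pr(\widehat A_{T,N}=A^\star)\ge 1$, hence the limit equals $1$.

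The main subtlety is the interplay between $\delta_T$ and $\epsilon_{\mathrm{stat}}$. Under Lemma~\ref{lem:uniform_ablation}, $\epsilon_{\mathrm{stat}}$ grows like $\sqrt{\log(1/\delta_T)/T_{\mathrm{eff}}}$, so $\delta_T\to 0$ must be slow enough that $\epsilon_{\mathrm{stat}}(T,d,\delta_T)\to0$. For instance, $\delta_T=1/T$ works under geometric mixing, where $T_{\mathrm{eff}}\asymp T$. The corollary assumes this compatibility as a hypothesis, so I only need to remark on it rather than prove it.

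A second point to state explicitly is that Assumptions~\ref{assump:meta_residual} and~\ref{assump:readout_residual} are taken to hold, on the same event, for every $(T,N)$ in the sequence. With that in place, the high-probability event of Lemma~\ref{lem:error_decomp} is available at each stage, and no further union bound over the sequence is needed, because convergence in probability only requires the bound pointwise in $T$.
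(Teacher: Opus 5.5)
Your proposal is correct and is essentially the paper's proof: once $T$ and $N$ are large enough that $\epsilon_{\mathrm{stat}}(T,d,\delta_T)+\epsilon_{\mathrm{meta}}(N,B,K)+\epsilon_{\mathrm{readout}}(T,N)<\Delta_{\min}/2$, Theorem~\ref{thm:appendix_finite_error_recovery} applied with $\delta=\delta_T$ gives $\Pr(\widehat{A}_{T,N}=A^\star)\ge 1-\delta_T\to 1$. Your additional remarks make explicit what the paper's two-line proof leaves implicit: you fix the threshold $\gamma=\Delta_{\min}/2$ along the whole sequence, and you note that $\delta_T$ must decay slowly enough for $\epsilon_{\mathrm{stat}}$ to vanish.
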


\begin{proof}
For sufficiently large $T$ and $N$, $\epsilon_{\mathrm{stat}}(T,d,\delta_T)+\epsilon_{\mathrm{meta}}(N,B,K)+\epsilon_{\mathrm{readout}}(T,N)<\Delta_{\min}/2$. The result follows directly from Theorem~\ref{thm:appendix_finite_error_recovery}.
\end{proof}

A concrete sufficient condition can be obtained from the statistical bound above. For example, it is enough that
\[
T_{\mathrm{eff}}\ge C\frac{\mathcal{C}(\mathcal{F})+\log(2d^2/\delta)}{\Delta_{\min}^2},
\qquad
\epsilon_{\mathrm{meta}}(N,B,K)+\epsilon_{\mathrm{readout}}(T,N)\le\frac{\Delta_{\min}}{4}.
\]
This condition should be interpreted as a sufficient recovery regime rather than a sharp sample-complexity characterization. It shows that graph recovery is controlled jointly by temporal sample size, function-class complexity, graph dimension through a union bound over $d^2$ candidate edges, and the quality of meta-optimization and perturbation readout.

The theorem is stated in terms of an ideal ablation risk-gap target because this quantity has a direct population interpretation: it measures how much optimal prediction risk increases when the source history of variable $j$ is removed. The implemented method avoids retraining a separate restricted predictor for every ordered pair $(j,i)$. Instead, after training a shared temporal predictor, it computes a post-training perturbation score that measures how the target prediction changes when source-specific historical information is removed or neutralized.

This distinction is important in high dimensions. Explicitly fitting all restricted models would require $O(d^2)$ model refits, which is incompatible with voxel-level or traffic-scale systems. The perturbation readout is therefore used as a scalable approximation to the same ablation principle. The residual $\epsilon_{\mathrm{readout}}$ explicitly records this approximation gap. In the infinite-data, well-specified, and fully optimized limit, if the perturbation operation faithfully removes the same source-specific information as the ideal ablation operator, then $\epsilon_{\mathrm{readout}}\to 0$. In finite samples, the residual depends on predictor quality, optimization quality, score normalization, and the fidelity of the perturbation operator.

Finally, the identifiability component of the theorem comes from the delayed causal system and the population risk margin, not from meta-learning itself. Meta-learning enters through the finite-error term $\epsilon_{\mathrm{meta}}(N,B,K)$. In the proposed framework, each target variable defines a task, and all tasks share the same temporal predictor and causal-strength parameterization. The support/query split encourages the shared parameters to adapt efficiently to target-specific prediction tasks, while the outer update aggregates evidence across a mini-batch of target variables. Compared with fitting independent target-specific models, this shared episodic training can reduce adaptation error and optimization error in finite samples. The theory captures this effect through the condition that $\epsilon_{\mathrm{meta}}$ becomes small enough relative to the population margin $\Delta_{\min}$.

Thus, the theorem does not claim that meta-learning alone identifies causal structure. Rather, under standard delayed-system identifiability conditions, meta-learning provides a scalable optimization mechanism for approaching the population risk-gap score in high-dimensional temporal systems.

\section{Experimental Details and Supplementary Results}
\label{app:experiments}

This appendix consolidates the implementation details and supplementary results supporting the empirical claims in the main text. We place baseline configurations first, followed by SCDM training and readout details, and then the supplementary dataset protocols and experimental results.

\subsection{Baseline and Implementation Details}
\label{app:baseline_hparams}
\label{app:hyperparameters}

This subsection reports the baseline configurations, SCDM hyperparameters, source-isolated readout protocol, thresholding rules, and compute setup used in the experiments. All methods are evaluated on the same train/validation/test split whenever applicable, and self-edges are excluded from graph-level evaluation unless otherwise stated.

\paragraph{Baseline methods.}
We report the baseline hyperparameter settings used in the controlled benchmark comparison. 
For NC8, we follow the established benchmark configuration used in prior work whenever available. 
For the simulated fMRI and simulated EEG benchmarks, lag-based baselines are configured to cover the corresponding data-generating temporal dependencies whenever supported by the public implementation. 
Unless otherwise stated, remaining hyperparameters follow the default public implementations.

\paragraph{UnCLe.}
UnCLe~\citep{bi2025uncle} is available at \url{https://github.com/etigerstudio/uncle-causal-discovery}. On NC8, we follow the standard benchmark configuration with lag 1, kernel size 8, 6 TCN blocks, 20 filters, a $1000+2000$ epoch training schedule, and learning rate $3\times10^{-4}$. For simulated fMRI and simulated EEG, we set lag 3 and kernel size 5 to match the data-generating temporal dependency. Both neuroimaging-inspired benchmarks use one hidden layer with hidden size 50, encoder and decoder hidden layers $[50]$, prediction length 1, a $10+10$ epoch training schedule, and learning rate $10^{-4}$.

\paragraph{PCMCI.}
For PCMCI~\citep{runge2019detecting}, we use partial correlation as the conditional independence test unless otherwise specified. The maximum lag is set to the benchmark lag for NC8 and to the known simulation lag for simulated fMRI and simulated EEG. Since PCMCI is a constraint-based method and does not require neural-network training, its main settings are the conditional independence test, maximum lag, and significance threshold rather than epochs or learning rates.

\paragraph{cMLP.}
cMLP~\citep{tank2021neural} is available at \url{https://github.com/iancovert/Neural-GC}. On NC8, we use lag 16, one hidden layer, 1000 epochs, and learning rate $5\times10^{-3}$. For simulated fMRI and simulated EEG, we use lag 3, hidden size $[20]$, 3000 maximum iterations, and learning rate $5\times10^{-2}$. The target-specific cMLP architecture uses $\mathrm{Conv1d}(p\!\rightarrow\!20,k=5)$ followed by $\mathrm{Conv1d}(20\!\rightarrow\!1,k=1)$, where $p$ is the input lag order.

\paragraph{TCDF.}
TCDF~\citep{nauta2019causal} is available at \url{https://github.com/M-Nauta/TCDF}. On NC8, we use kernel size 16, one hidden layer, 1000 epochs, and learning rate $5\times10^{-3}$. For simulated fMRI and simulated EEG, we use kernel size 4 and dilation coefficient 4 so that the effective temporal range covers lag-3 dependencies. Both neuroimaging-inspired benchmarks use one effective hidden layer, 1000 epochs, and learning rate $10^{-2}$. In the implementation, we use the ADDSTCN architecture and set the maximum explicit delay to 5 for the neuroimaging-inspired benchmarks.

\paragraph{GVAR.}
GVAR~\citep{marcinkevivcs2021interpretable} is available at \url{https://github.com/i6092467/GVAR}. On NC8, we use lag order 16, one hidden layer, 1000 epochs, and learning rate $10^{-4}$. For simulated fMRI and simulated EEG, we use lag order 3, one hidden layer with width 50, 1000 epochs, and learning rate $10^{-4}$. The model uses one SENNGC coefficient network per lag.

\paragraph{CUTS+.}
CUTS+~\citep{cheng2024cuts+} is available at \url{https://github.com/jarrycyx/UNN/tree/main/CUTS_Plus}. We use learning rate $10^{-3}$, 64 training epochs, and maximum number of groups 32. For NC8, the temporal range is aligned with the benchmark lag setting $L=16$. For simulated fMRI and simulated EEG, the temporal range is set to cover lag-3 dependencies. We tune the regularization coefficient over $\lambda\in\{0.1,0.05,0.01,0.005\}$.

\begin{table}[h]
\centering
\small
\caption{Temporal settings of each method. These quantities are not directly comparable: SCDM's $p$ and the observed lag orders set how many past observations enter the model, UnCLe's lag acts on latent representations after a convolutional encoder, and convolution kernels set a receptive field. The true maximum lag is 16 for NC8, 5 for simulated EEG, and 3 for simulated fMRI.}
\label{tab:baseline_temporal}
\begin{tabularx}{\linewidth}{@{}llcccX@{}}
\toprule
Method & Temporal setting & NC8 & Sim.\ EEG & Sim.\ fMRI & How chosen \\
\midrule
SCDM & observed-history order $p$ & 16 & 5 & 3 & True maximum lag \\
UnCLe & latent AR lag / kernel & 1 / 8 & 3 / 5 & 3 / 5 & NC8: UnCLe benchmark configuration; others: cover the true lag \\
cMLP & observed lag order & 16 & 3 & 3 & NC8: UnCLe benchmark configuration; others: lag 3 \\
GVAR & observed lag order & 16 & 3 & 3 & As for cMLP \\
TCDF & kernel / dilation / max delay & 16 / -- / -- & 4 / 4 / 5 & 4 / 4 / 5 & Receptive field covers the true lag \\
PCMCI & maximum lag $\tau_{\max}$ & 16 & 5 & 3 & True maximum lag \\
CUTS+ & temporal range $L$ & 16 & 3 & 3 & NC8: 16; others: 3 \\
\bottomrule
\end{tabularx}
\end{table}

\paragraph{Evaluation consistency.}
All methods are evaluated on the same train/validation/test split whenever applicable. Continuous directed scores are evaluated using AUROC and AUPRC. For threshold-dependent metrics such as F1 and SHD, the threshold is selected on the validation split and applied once to the test split. Unless otherwise stated, self-edges are excluded from graph-level evaluation because they represent autoregressive self-dependencies rather than cross-variable causal discovery.

\paragraph{SCDM.}
We report all hyperparameter choices for SCDM.
Unless noted otherwise, the settings in this section were used for every dataset.

\noindent\textit{Model architecture.} The prediction network $f_\theta$ is a two-hidden-layer MLP with widths $dpr\to128\to64\to1$ and ReLU activations, where $d$ is the number of variables, $p$ the observed-history order, and $r$ the number of components per edge in $W$. We use $r=1$ (scalar edge strength) for all datasets except LargeST-GLA-1500 ($r=16$) and LargeST-GLA-3000 ($r=8$).

\noindent\textit{Causal-strength matrix initialization.}
Entries of $\mathbf{W}$ are drawn independently from
$\mathrm{Uniform}(0.0,\,0.3)$ for all datasets except LargeST-GLA-1500 and
LargeST-GLA-3000, where the range is $\mathrm{Uniform}(0.0,\,0.05)$.
Self-edges are permanently masked to zero before and during training
(\texttt{allow\_self\_loops=false}).

\noindent\textit{Regularization.} For a sampled target $i$, both the support and the query loss penalize only that target's column $W[:,i]$, so each task regularizes only the incoming edges it can inform. For $r=1$ the penalty is the element-wise $\ell_1$ norm; for $r>1$ it is a group lasso, the sum over sources of the $\ell_2$ norm of each edge's $r$ components. The coefficient $\lambda_W$ is set per dataset (Table~\ref{tab:hparam_scdm}).

\noindent\textit{Input normalization.}
Time series are z-score normalized using statistics estimated on the training
split, with the same transform applied to validation and test.
The simulated fMRI dataset is provided in pre-normalized form and is used
without further scaling.

\noindent\textit{Meta-learning.} SCDM uses first-order MAML with $K=3$ inner gradient steps per task. The inner learning rate is $\alpha_\theta=0.01$ for the predictor and $\alpha_W=0.001$ for $W$; the smaller rate keeps a single task from moving the shared graph too far during adaptation. The outer optimizer is Adam with learning rate $\beta=0.001$ and weight decay $10^{-5}$, and gradients are clipped to unit $\ell_2$ norm before each outer update.

\noindent\textit{Episode structure.} Each outer update samples $B$ target tasks uniformly at random. For each task, a contiguous support window of length $L_{\mathrm{sup}}$ and a contiguous query window of length $L_{\mathrm{qry}}$ are drawn from the training split. Table~\ref{tab:hparam_scdm} lists $B$, $L_{\mathrm{sup}}$, $L_{\mathrm{qry}}$, $p$, $\lambda_W$, and the number of outer updates $N$ for each dataset.

\begin{table}[t]
\centering
\caption{Dataset-specific hyperparameters for SCDM. $d$: number of observed variables; $p$: observed-history order; $B$: target tasks per outer update; $L_{\mathrm{sup}}$/$L_{\mathrm{qry}}$: support/query window length; $\lambda_W$: sparsity coefficient (same in support and query losses); $N$: number of outer updates; Split: train/validation/test fraction.}
\label{tab:hparam_scdm}
\small
\begin{tabular}{lrrrrrrrr}
\toprule
Dataset & $d$ & $p$ & $B$ & $L_{\mathrm{sup}}$ & $L_{\mathrm{qry}}$ & $\lambda_W$ & $N$ & Split \\
\midrule
NC8
  & 8 & 16 & 8
  & 64 & 64
  & 0.05 & 1{,}000 & 70/15/15 \\
Simulated fMRI
  & 100 & 3 & 32
  & 64 & 64
  & 0.05 & 5{,}000 & 70/15/15 \\
Simulated EEG
  & 100 & 5 & 32
  & 64 & 64
  & 0.05 & 3{,}000 & 70/15/15 \\
Voxel-level fMRI
  & 1{,}000 & 3 & 32
  & 32 & 32
  & 0.03 & 500 & 70/15/15 \\
LargeST-GLA-1500
  & 1{,}500 & 3 & 64
  & 64 & 64
  & $10^{-4}$ & 200 & 60/20/20 \\
LargeST-GLA-3000
  & 3{,}000 & 3 & 64
  & 64 & 64
  & $10^{-4}$ & 1{,}000 & 60/20/20 \\
\bottomrule
\end{tabular}
\end{table}

\noindent\textit{Simulated EEG note.}
The main simulated EEG benchmark uses the full 100
source-space ROIs for non-latent causal discovery. For the supplementary latent-view experiment, 4 ROIs are withheld during training and 96 ROIs are provided as observed variables; this setting is reported separately in
Appendix~\ref{app:latent_results}. The 64 EEG sensor channels are generated by a synthetic forward projection for qualitative visualization and robustness analysis, and are not the graph dimension used for the main causal discovery evaluation.

\noindent\textit{Simulated fMRI vs.\ voxel-level fMRI.} The simulated fMRI benchmark in the main results table operates at the ROI level ($d=100$, Schaefer-100 parcellation) and is used to evaluate graph-recovery accuracy. The voxel-level fMRI experiment ($d=1000$) is a scalability benchmark; at this scale we use the structural readout $|W_{j,i}|$ described below.

\noindent\textit{Threshold selection and evaluation.}
The binary adjacency matrix is obtained by sweeping 200 evenly-spaced candidate
thresholds between the minimum and maximum score on the validation split and
selecting the threshold that maximizes the F1 score on the validation set.
The chosen threshold is then applied to the test-set scores without further
adjustment.
For voxel-level fMRI we additionally apply a top-$k$ sparsity constraint
($k = 7142$, matching the ground-truth sparsity) before thresholding.
All metrics (AUROC, AUPRC, F1, SHD, precision, recall) are computed on
off-diagonal entries only; self-edges ($i=j$) are excluded throughout.
AUROC and AUPRC are computed with \texttt{sklearn.metrics.roc\_auc\_score}
and \texttt{average\_precision\_score}, respectively.

\noindent\textit{Compute.} All SCDM runs used one NVIDIA A100 80\,GB PCIe GPU on the \texttt{epyc-gpu} SLURM partition. NC8 jobs used 32\,GB CPU memory and a 3-hour limit; simulated fMRI multi-seed runs used a 1.5-hour limit; voxel-level fMRI runs used 64\,GB CPU memory and a 4-hour limit. The five-method LargeST-GLA-3000 feasibility comparison in Table~\ref{tab:gla3000_feasibility} requested 128\,GB of host memory and an 80-minute limit per run. For this comparison, peak GPU memory denotes the maximum usage sampled by \texttt{nvidia-smi} during each run; for failed runs, it is the maximum recorded before failure. Such polling may miss short-lived allocation peaks. GPU memory was not recorded for the controlled benchmarks.

\noindent\textit{Computational cost.} Let $C_f(d,p)$ be the cost of one forward and backward pass of the predictor on one sample; for the MLP above with hidden widths $h_1=128$ and $h_2=64$, $C_f(d,p)=O(dprh_1+h_1h_2)$. One meta-episode adapts $B$ tasks for $K$ steps on $L_{\mathrm{sup}}$ samples and evaluates $L_{\mathrm{qry}}$ query samples, so it costs $O(B(KL_{\mathrm{sup}}+L_{\mathrm{qry}})C_f(d,p))$, and $N$ episodes cost $O(NB(KL_{\mathrm{sup}}+L_{\mathrm{qry}})C_f(d,p))$. First-order MAML avoids second-order terms through the inner loop. With fixed window lengths, $T$ affects only the number of available windows and data storage, not the cost of an episode. Memory is dominated by $W$ ($O(d^2r)$), the first predictor layer ($O(dprh_1)$), and the temporary adapted copies for the $B$ sampled tasks. The structural readout costs $O(d^2)$; the source-isolated readout costs $O(d^2 M L_{\mathrm{qry}}C_f(d,p))$ plus $dM$ adaptations. These expressions count work per update or per readout; they do not bound the number of updates needed to converge.

\paragraph{Post-training graph readout.}\label{app:score_readout_details}
The readout is applied only after training and is not part of the optimization objective. We use the main-text convention: $W_{j,i}$ gates source $j$ for target $i$, and $s_{j\to i}$ is the score of edge $j\to i$.

\noindent\textit{Source-isolated readout.} For each target $i$, we first adapt a temporary copy of the trained parameters on a fresh support window with the same $K=3$ inner steps used in training, because training also measures query losses after adaptation. On the paired query window we then compute two losses. The baseline loss $L^{(i)}_{\mathrm{zero}}$ sets the whole column $W[:,i]$ to zero, which removes the history of every source. The source-isolated loss $L^{(i)}_{j}$ keeps only $W_{j,i}$ and sets $W_{k,i}=0$ for all $k\neq j$, which keeps only the history of source $j$. Because each $W_{k,i}$ multiplies all lags of source $k$, zeroing $W_{k,i}$ is the same as replacing the history of source $k$ with zeros. The score is
\begin{equation}
s_{j\to i}=\frac{L^{(i)}_{\mathrm{zero}}-L^{(i)}_{j}}{L^{(i)}_{\mathrm{zero}}+\varepsilon},\qquad \varepsilon=10^{-8},
\label{eq:pert_score_main}
\end{equation}
so a larger value means that source $j$ alone removes more of the prediction error of target $i$. Scores are averaged over $M$ independent support/query draws ($M=100$ for NC8 and $M=50$ for simulated fMRI and EEG). We use this readout only for $d\le100$ because its cost grows with $d^2$ (see the computational cost paragraph above).

\noindent\textit{Structural readout.} For voxel-level fMRI and LargeST-GLA we use $s_{j\to i}=|W_{j,i}|$ (the $\ell_2$ norm of the $r$ components when $r>1$), which costs $O(d^2)$ and needs no model evaluation.

\subsection{Dataset Overview and Evaluation Protocols}
\label{app:dataset_overview}

This subsection gives a compact overview of the benchmark suite and evaluation protocol. Detailed simulation construction choices, including graph-generation rules, dynamics, observation models, and recoverability diagnostics, are provided separately in Appendix~\ref{app:simulation_setups}.

\paragraph{Benchmark overview.}
NC8 is a low-dimensional nonlinear temporal benchmark with eight observed variables and a known lagged directed graph. It is used to verify causal recovery in a small controlled setting and to support the masked-NC8 latent diagnostic.
The simulated fMRI benchmark is a TVB-inspired, atlas-grounded BOLD-like simulation over 100 Schaefer cortical ROIs. It combines Yeo-network organization, delayed nonlinear ROI interactions, and HRF-based observation effects while retaining a known directed ground-truth graph.
The simulated EEG benchmark is a source-space EEG-like simulation over the same Schaefer-100 ROI scale. It uses network-modular directed source interactions, EEG-like oscillatory components, and a simplified 64-channel source-to-sensor projection for qualitative visualization.
The voxel-level fMRI benchmark scales the controlled setting to 1000 observed variables and nearly one million candidate directed pairs, preserving synthetic ground truth for runtime--accuracy evaluation.
LargeST-GLA is a real-world traffic benchmark without complete directed causal ground truth; it is therefore used for scalability, forecasting, and qualitative structural analysis rather than causal AUROC/AUPRC.

\paragraph{Evaluation protocol.}
For datasets with known ground-truth graphs, we report AUROC and AUPRC as threshold-independent causal discovery metrics. When threshold-dependent metrics such as F1 and SHD are reported, the threshold is selected on the validation split and applied once to the test split. Unless otherwise stated, self-edges are excluded from graph-level evaluation because they correspond to autoregressive self-dependencies rather than cross-variable causal discovery. For LargeST, we do not report causal AUROC or AUPRC because a complete directed causal graph is not available.

\begin{table}[t]
\centering
\small
\caption{Summary of datasets used in the experiments. 
The symbol $d$ denotes the number of observed variables provided to the learner, and $q$ denotes withheld or latent variables when applicable. 
For LargeST, chronological train/validation/test splits are used and the complete directed causal ground truth is unavailable.}
\label{tab:dataset_summary}
\begin{tabular}{lcccl}
\toprule
\textbf{Dataset} & $\boldsymbol{d}$ & $\boldsymbol{q}$ & \textbf{GT graph} & \textbf{Primary role} \\
\midrule
NC8 & 8 & 0 & Yes & Controlled causal recovery \\
Masked NC8 & 6 & 2 & Yes & Controlled latent diagnostic \\
Simulated fMRI & 100 & 0 & Yes & Neuroimaging causal recovery \\
Latent-view simulated fMRI & 96 & 4 & Yes & Latent-view observed-block recovery \\
Simulated EEG & 100 & 0 & Yes & Source-space causal recovery \\
Latent-view simulated EEG & 96 & 4 & Yes & Latent-view observed-block recovery \\
Voxel-level fMRI & 1000 & 4 & Yes & High-dimensional scalability \\
LargeST-GLA-1500 & 1500 & 0 & No & Real-world scalability and forecasting \\
LargeST-GLA-3000 & 3000 & 0 & No & Real-world scalability and forecasting \\
\bottomrule
\end{tabular}
\end{table}

\begin{table}[t]
\centering
\small
\caption{Dataset split protocol. Synthetic datasets use train, validation, and test partitions for model fitting, threshold selection, and final reporting. LargeST uses chronological splits to avoid leakage from future intervals.}
\label{tab:dataset_splits}
\begin{tabular}{p{0.30\linewidth}p{0.58\linewidth}}
\toprule
\textbf{Dataset group} & \textbf{Split protocol} \\
\midrule
NC8 and masked NC8 
& Training, validation, and test sequences or replicas are separated before model selection and final reporting. \\
\midrule
Simulated fMRI / EEG and latent-view variants 
& 200/50/50 train/validation/test samples are used for controlled recovery experiments. \\
\midrule
Voxel-level fMRI 
& Train, validation, and test intervals are separated; ground-truth graph evaluation is performed on held-out test scores. \\
\midrule
LargeST-GLA 
& Chronological train/validation/test intervals are used to avoid future-to-past leakage. \\
\bottomrule
\end{tabular}
\end{table}

\begin{table}[t]
\centering
\small
\caption{Evaluation role of each dataset group. Controlled synthetic datasets support causal recovery metrics because directed ground-truth graphs are available. LargeST is used for real-world scalability, forecasting, and qualitative structural analysis because complete directed causal ground truth is unavailable.}
\label{tab:evaluation_usage}
\begin{tabular}{p{0.28\linewidth}p{0.31\linewidth}p{0.31\linewidth}}
\toprule
\textbf{Dataset group} & \textbf{Causal recovery metrics} & \textbf{Auxiliary use} \\
\midrule
NC8, simulated fMRI, simulated EEG 
& AUROC, AUPRC, F1, SHD against known directed graphs 
& Runtime and controlled ablations \\
\midrule
Masked NC8 and latent-view neuroimaging benchmarks 
& Block-level recovery diagnostics under controlled masking 
& Latent-extension analysis \\
\midrule
Voxel-level fMRI 
& AUROC/AUPRC against known high-dimensional synthetic graph 
& Runtime--performance tradeoff and scalability analysis \\
\midrule
LargeST-GLA 
& No causal AUROC/AUPRC because complete directed ground truth is unavailable 
& Real-world scalability, forecasting, and qualitative graph visualization \\
\bottomrule
\end{tabular}
\end{table}

Tables~\ref{tab:dataset_summary}--\ref{tab:evaluation_usage} summarize the benchmark suite, split protocol, and evaluation role of each dataset. 
This organization separates controlled causal recovery experiments from high-dimensional scalability and real-world forecasting experiments.

\begin{figure}[h]
\centering
\includegraphics[width=0.98\linewidth]{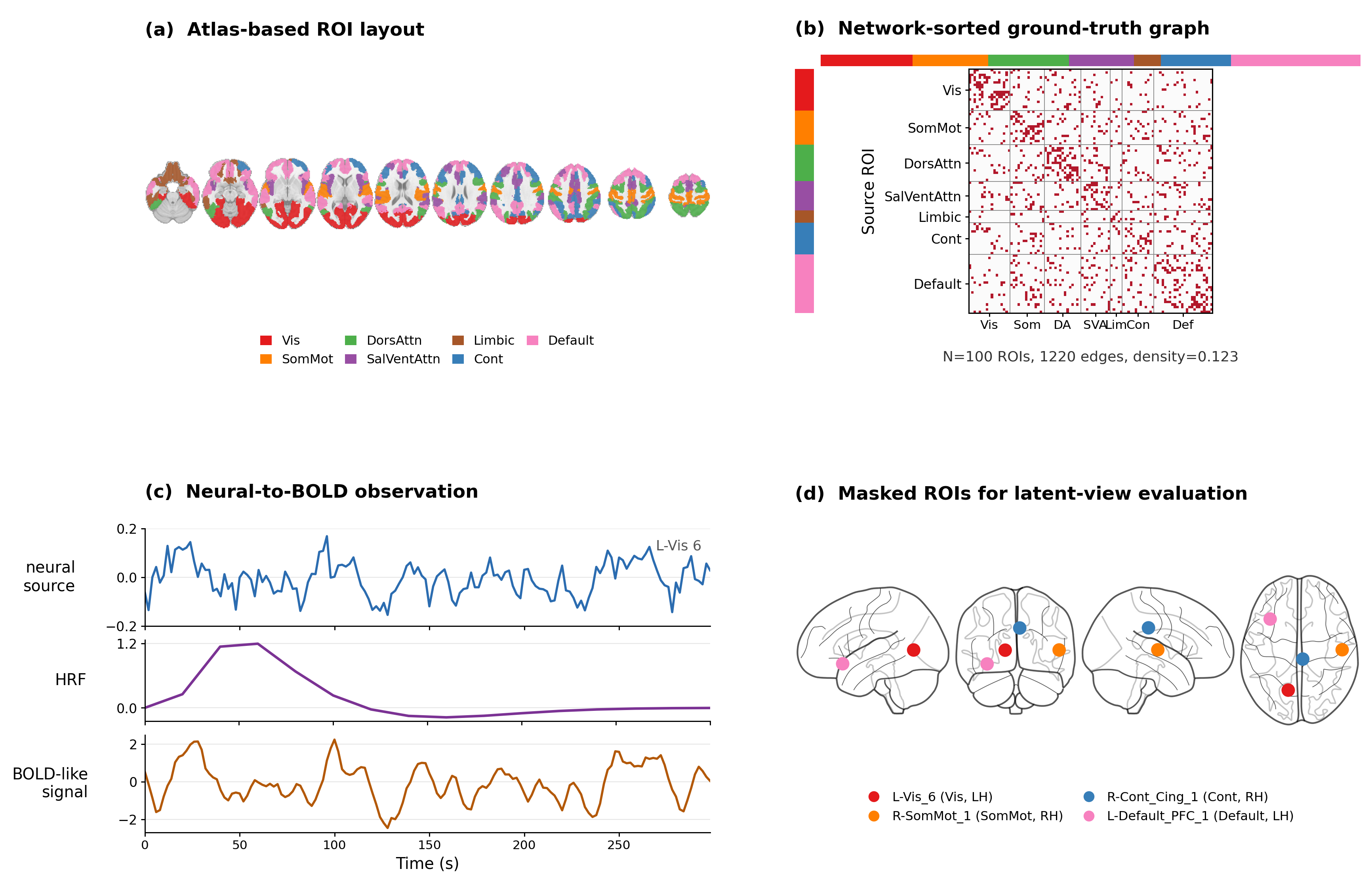}
\caption{
Overview of the simulated fMRI benchmark. 
The benchmark uses Schaefer-100 cortical ROIs grouped by Yeo-7 functional networks. 
The visualization summarizes the atlas-based ROI layout, the network-sorted directed ground-truth graph, representative BOLD-like time series, and the structured lagged effective-connectivity pattern used for evaluation. 
This figure illustrates the construction of the controlled TVB-inspired, atlas-grounded fMRI simulation rather than empirical subject-level fMRI connectivity.
}
\label{fig:simulated_fmri_dataset_overview}
\end{figure}

\paragraph{Neuroimaging simulation overviews.}
Figures~\ref{fig:simulated_fmri_dataset_overview} and~\ref{fig:simulated_eeg_dataset_overview} provide compact visual summaries of the two atlas-grounded neuroimaging-inspired benchmarks. The detailed fMRI and EEG construction procedures are intentionally kept in Appendix~\ref{app:simulated_fmri_setup} and Appendix~\ref{app:simulated_eeg_setup}, respectively, to avoid duplicating the dataset specification here.

\begin{figure}[h]
\centering
\includegraphics[width=0.98\linewidth]{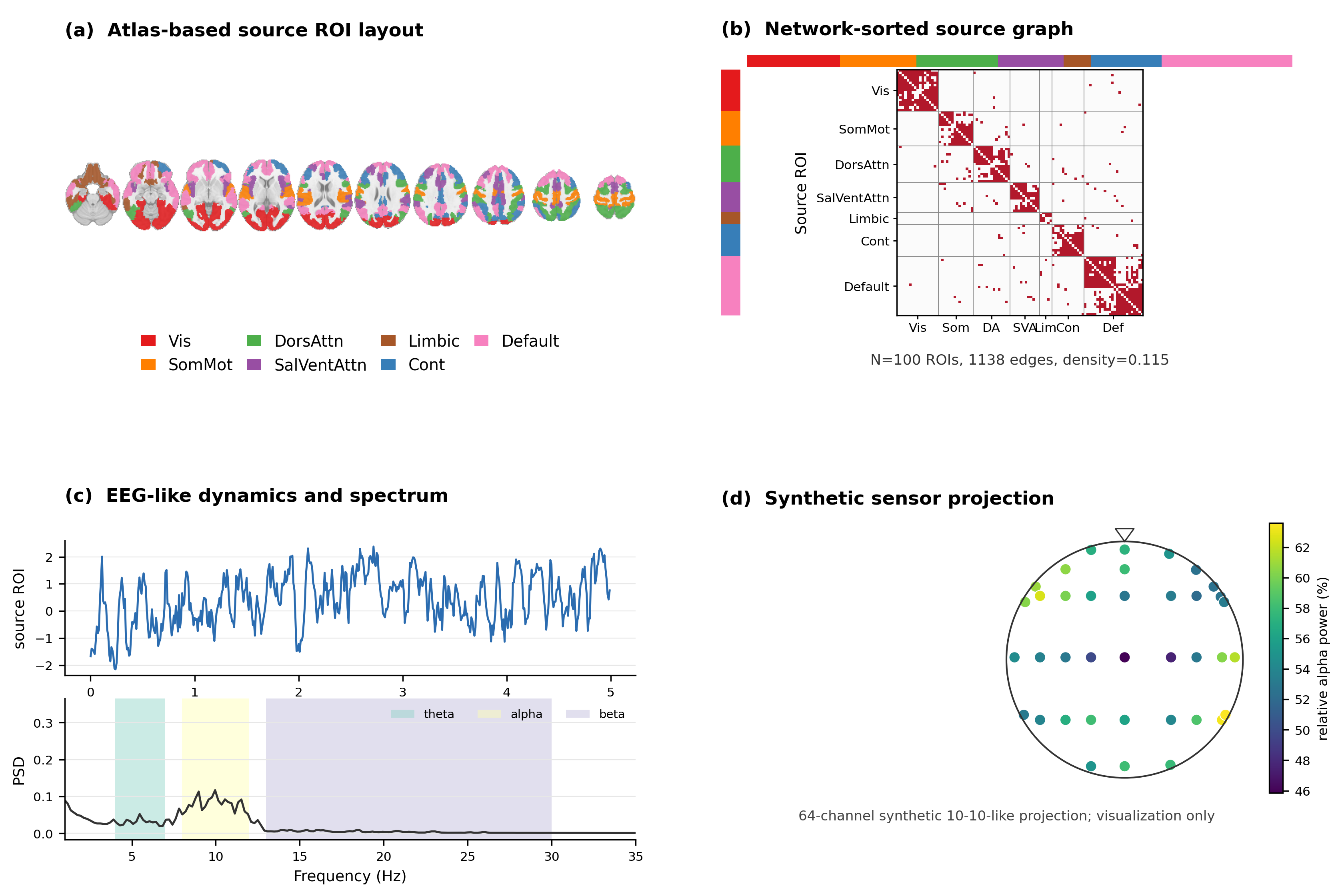}
\caption{
Overview of the simulated EEG benchmark. 
The benchmark is constructed as a Schaefer-100 source-space EEG-like simulation with Yeo-network-modular directed source interactions. 
The visualization summarizes the source ROI organization, the network-sorted source-level ground-truth graph, representative oscillatory source signals, and the synthetic 64-channel source-to-sensor projection used for qualitative visualization. 
The projection is a simplified source-mixing model and is not intended to represent a subject-specific BEM/FEM EEG forward solution.
}
\label{fig:simulated_eeg_dataset_overview}
\end{figure}

\subsection{Additional Experimental Results}
\label{app:supplementary_results}

This subsection reports the high-dimensional voxel-level fMRI runtime--accuracy tradeoff, supplementary LargeST forecasting results, and additional ablations.

\paragraph{High-dimensional voxel-level fMRI results.}\label{app:voxel_fmri_results}

The voxel-level fMRI benchmark is a high-dimensional synthetic benchmark with known ground-truth causal structure. 
It contains $d=1000$ observed variables and $T=800$ time points, yielding $999{,}000$ off-diagonal candidate directed relations after excluding self-edges. 
The ground-truth graph contains $7{,}142$ positive directed off-diagonal edges, corresponding to a positive edge density of $0.7149\%$. 
Since the expected AUPRC of a random ranking equals the positive edge density, raw AUPRC values are numerically small in this setting. 
We therefore additionally report AUPRC enrichment over random:
\[
    \mathrm{AUPRC\ enrichment}
    =
    \frac{\mathrm{AUPRC}}{\mathrm{positive\ edge\ density}}.
\]
All AUROC and AUPRC values are computed from continuous causal scores with self-edges excluded. 
For the fixed-budget runtime experiment, each value is averaged over three random seeds at aligned wall-clock budgets of 200, 400, 600, and 800 seconds.

\paragraph{LargeST-GLA-3000 feasibility.}
\label{app:largest_gla3000}
Table~\ref{tab:gla3000_feasibility} in Section~\ref{sec:largest_scalability} reports the feasibility test on the LargeST-GLA-3000 subset, which contains $3000$ traffic sensors and $8{,}997{,}000$ off-diagonal candidate directed pairs. All five methods use the same split, z-scored preprocessing, one NVIDIA A100 80\,GB GPU per run, three seeds, and an 80-minute limit. A run counts as completed only if it writes a valid directed-score checkpoint for all off-diagonal pairs. SCDM completes training and graph extraction in 56.96\,s on average with 6.9\,GB peak GPU memory. UnCLe writes an intermediate checkpoint after its first training phase and then runs out of memory in its joint phase; for this failed run, the reported peak GPU memory is the peak recorded before the failure. CUTS+, JRNGC, and cMLP start normally but do not write a completed checkpoint before the limit. Because LargeST has no complete causal ground truth, this test measures whether a method can run at this scale, not causal accuracy.
 
\begin{figure}[t]
    \centering
    \includegraphics[width=\linewidth]{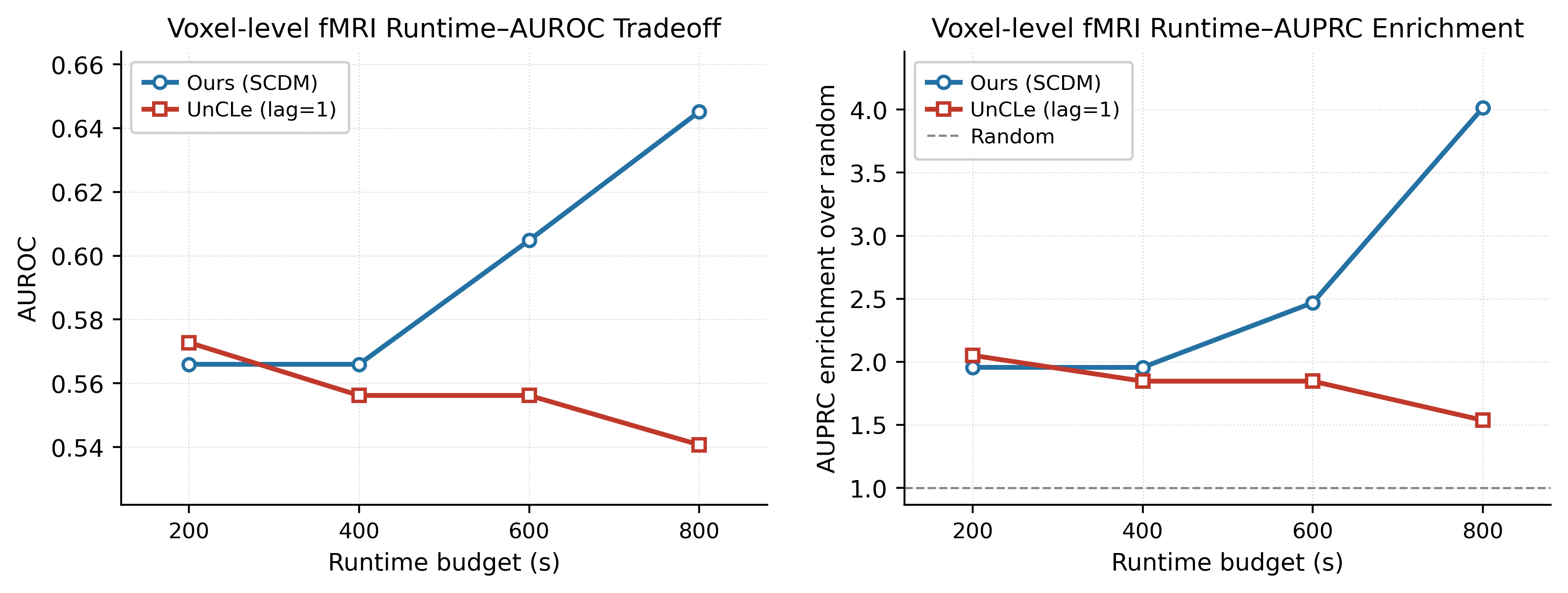}
    \caption{
Runtime--accuracy tradeoff on the high-dimensional voxel-level fMRI benchmark under fixed wall-clock budgets.
Each point reports the mean over three random seeds, without confidence intervals.
Left: AUROC computed from continuous causal scores.
Right: AUPRC enrichment over random, defined as AUPRC normalized by the random-ranking baseline.
UnCLe is slightly ahead at 200 seconds; from 400 seconds on, SCDM has higher AUROC and AUPRC enrichment. 
}
    \label{fig:voxel_fmri_runtime_tradeoff}
\end{figure}
 
Figure~\ref{fig:voxel_fmri_runtime_tradeoff} plots the results of Table~\ref{tab:voxel_fmri_fixed_budget} (Section~\ref{sec:voxel_fmri_scalability}). CUTS+ uses its official configuration with the temporal range set to $L=3$ to match the generating process. Because the voxel-level graph is highly sparse, the enrichment-normalized AUPRC is a more interpretable sparse-edge ranking measure than raw AUPRC. We include UnCLe and CUTS+ in this study because they output continuous directed scores and save intermediate checkpoints, which allows evaluation at matched wall-clock budgets. cMLP and TCDF fit one model per target and PCMCI runs conditional independence tests over variable pairs, lags, and conditioning sets, so they do not fit this fixed-budget protocol, and we did not obtain a completed GVAR run under it; cMLP is instead included in the 3{,}000-variable feasibility test (Appendix~\ref{app:largest_gla3000}).

\paragraph{Supplementary LargeST forecasting diagnostic.}
\label{app:largest_forecasting_results}

Because LargeST does not provide a complete directed ground-truth causal graph, we additionally evaluate the learned structures through a one-step graph-conditioned forecasting diagnostic. We use the same graph-masked RidgeVAR forecaster for all graph priors and vary only the graph used to select candidate parent sensors. Active sensors are selected using train-only standard deviation $\sigma_{\mathrm{train}}\ge 10$, and the same mask is applied to all methods. This experiment is intended to assess whether the learned graph provides useful predictive structure, rather than to benchmark SCDM as a forecasting-specific architecture.

\begin{table}[h]
\centering
\small
\setlength{\tabcolsep}{4pt}
\caption{
Supplementary one-step graph-conditioned forecasting on LargeST-GLA-1500 active sensors.
All variants use the same graph-masked RidgeVAR forecaster with $k=5$, lag $p=3$, and fixed ridge regularization $\alpha=1.0$.
Active sensors are selected using train-only standard deviation $\sigma_{\mathrm{train}}\ge 10$ ($d=1489$).
}
\label{tab:largest_graph_forecasting_aux}
\begin{tabular}{lccc}
\toprule
Graph prior & MAE $\downarrow$ & RMSE $\downarrow$ & $\Delta$MAE vs. Self-only $\uparrow$ \\
\midrule
Self-only AR          & 20.60            & 32.35            & ---   \\
Random                & 19.77 $\pm$ 0.06 & 31.11 $\pm$ 0.16 & +0.83 \\
Distance-kNN          & 20.37            & 31.98            & +0.23 \\
Road-network          & 20.68            & 32.91            & $-$0.08 \\
UnCLe                 & 19.90            & 31.38            & +0.70 \\
Permuted SCDM         & 19.92 $\pm$ 0.33 & 31.44 $\pm$ 0.77 & +0.68 \\
Distance-matched ctrl & 19.72 $\pm$ 0.05 & 31.01 $\pm$ 0.13 & +0.88 \\
\textbf{SCDM}         & \textbf{19.68} $\pm$ \textbf{0.10} & \textbf{30.93} $\pm$ \textbf{0.19} & \textbf{+0.92} \\
\bottomrule
\end{tabular}
\end{table}

Table~\ref{tab:largest_graph_forecasting_aux} shows that the SCDM-derived graph gives the lowest one-step MAE and RMSE among the tested graph priors. The improvement over Self-only AR is modest but consistent, reducing MAE by 0.92. SCDM also improves over the UnCLe-derived graph and the Permuted-SCDM control under the same forecasting backbone. The improvement over Permuted SCDM suggests that the target-specific parent assignment contains predictive signal.

At the same time, the margins over Random and Distance-matched controls are small. We therefore interpret this result as supplementary evidence that the learned graph is forecast-compatible, rather than as a standalone claim of forecasting superiority or causal correctness. Recursive multi-horizon rollouts produced mixed results and are not used as the primary forecasting diagnostic.

\paragraph{Supplementary ablation results.}
\label{app:additional_runtime_metrics}

We provide additional ablations to clarify which components of SCDM contribute to the reported causal recovery performance.
The main text reports the task-batch-size ablation, which isolates the role of target-task subsampling in the shared meta-learning framework.
Here, we further analyze the graph readout step by comparing two scoring rules applied to SCDM outputs: the learned causal-strength score $|W_{ij}|$ and the post-training source-isolated perturbation readout.

\paragraph{Readout ablation.}
This ablation is designed to test whether the reported recovery performance is driven only by the post-training perturbation readout or whether the learned causal-strength parameters already contain meaningful structural information.
For each dataset, we compare the same SCDM training pipeline under two edge-scoring rules.
The first uses the learned absolute causal-strength score $|W_{ij}|$ directly.
The second uses the source-isolated perturbation readout described in Appendix~\ref{app:score_readout_details}.
We do not change the training data, model architecture, optimizer, or evaluation protocol.

\begin{table}[t]
\centering
\small
\caption{Readout ablation on the simulated fMRI and simulated EEG benchmarks. 
Both readouts are applied to SCDM outputs. 
The learned $|W|$ score evaluates whether the trained causal-strength matrix itself contains useful structural information, while the source-isolated perturbation readout evaluates an alternative post-training graph-extraction procedure.
Results are reported as mean $\pm$ 95\% confidence interval.}
\label{tab:readout_ablation}
\begin{tabular}{llcc}
\toprule
\textbf{Dataset} & \textbf{Readout} & \textbf{AUROC $\uparrow$} & \textbf{AUPRC $\uparrow$} \\
\midrule
Simulated fMRI 
& Learned $|W|$ score 
& $0.867 \pm 0.006$ 
& $0.728 \pm 0.019$ \\
Simulated fMRI 
& Source-isolated perturbation readout 
& $0.832 \pm 0.003$ 
& $0.693 \pm 0.010$ \\
\midrule
Simulated EEG 
& Learned $|W|$ score 
& $0.713 \pm 0.053$ 
& $0.426 \pm 0.086$ \\
Simulated EEG 
& Source-isolated perturbation readout 
& $0.717 \pm 0.037$ 
& $0.476 \pm 0.067$ \\
\bottomrule
\end{tabular}
\end{table}

Table~\ref{tab:readout_ablation} compares the structural readout and the source-isolated readout on the same trained models. On simulated fMRI, the structural score $|W|$ has higher AUROC and AUPRC, so useful ranking information is already present in the learned matrix. On simulated EEG, the source-isolated readout has similar AUROC and higher AUPRC. The better readout therefore depends on the dataset, and neither readout implements a causal intervention.

\paragraph{Collinearity stress test.}\label{app:collinear}
The source-isolated score measures predictive contribution, so it can fail when two sources carry nearly the same information. We build NC8-Collinear, which adds a common driver $X_0$ with edges $X_0\to X_1$ and $X_0\to X_2$, a true edge $X_1\to X_3$, and no edge $X_2\to X_3$. $X_2$ is therefore a correlated non-parent that can stand in for $X_1$. We vary the driver strength $a$ over six levels with five seeds each and record the mean absolute correlation between $X_1$ and $X_2$, the condition number of the lagged input, the graph AUPRC, and the number of runs in which the non-edge $X_2\to X_3$ scores above the true edge $X_1\to X_3$ (a proxy reversal). Graph-level ranking stays useful (AUPRC between 0.741 and 0.965), but individual attribution weakens as collinearity grows: the first reversal appears at a mean absolute correlation of 0.572, and reversals occur in 5 of 30 runs overall and in 2 of 5 runs at the highest level. At $a=0$ the edges from $X_0$ carry no signal but remain in the ground truth, which explains the lower AUPRC at that level. Conclusions about individual edges should therefore be treated with care when observed sources are nearly redundant.

\begin{table}[h]
\centering
\small
\caption{NC8-Collinear stress test (five seeds per level). Reversals count runs in which the non-edge $X_2\to X_3$ scores above the true edge $X_1\to X_3$.}
\label{tab:collinear}
\begin{tabular}{@{}ccccc@{}}
\toprule
Driver $a$ & Mean $|\mathrm{corr}(X_1,X_2)|$ & Condition number & AUPRC $\uparrow$ & Reversals $\downarrow$ \\
\midrule
0.0 & 0.037 & 8.45 & 0.741 & 0/5 \\
0.2 & 0.252 & 9.91 & 0.960 & 0/5 \\
0.4 & 0.572 & 12.76 & 0.955 & 1/5 \\
0.6 & 0.750 & 17.09 & 0.947 & 1/5 \\
0.8 & 0.841 & 21.81 & 0.965 & 1/5 \\
1.0 & 0.892 & 26.69 & 0.915 & 2/5 \\
\bottomrule
\end{tabular}
\end{table}

\paragraph{Task-batch-size ablation.}
The task-batch-size ablation in the main text isolates the role of target-task subsampling.
All variants use the same full SCDM model, dataset, optimizer, training budget, and post-training score readout; only the number of target-variable tasks sampled per meta-episode is changed.
Using all target variables at every update removes target-task subsampling, but substantially increases the per-update computational cost and yields weaker recovery under the same training budget.
Moderate task batches provide a better balance between target coverage and optimization efficiency, supporting target-task subsampling as a practical scalability mechanism for high-dimensional temporal causal discovery.

\paragraph{Component ablation on simulated fMRI.}
\label{app:component_ablation_schaefer100}
We further isolate two central design choices of SCDM on the simulated fMRI benchmark: target-task subsampling and the shared structural matrix. All variants use the same dataset, training budget, optimizer, thresholding rule, and evaluation protocol. The full model uses target-task
subsampling with $B=32$ and a trainable shared causal-strength matrix $W$. The no-subsampling variant uses all $100$ target variables at each update. The MLP-only variant removes the trainable shared structural matrix and uses the same evaluation protocol for a fair comparison.

Table~\ref{tab:component_ablation_schaefer100} shows that removing target-task
subsampling substantially degrades graph recovery, reducing AUROC from
$0.867$ to $0.747$ and AUPRC from $0.728$ to $0.552$. Removing the shared
structural matrix also substantially weakens recovery, especially in AUPRC,
which drops from $0.728$ to $0.390$. Under the same training and evaluation protocol, a shared MLP without the structural matrix therefore recovers the graph much less accurately.

\begin{table}[t]
\centering
\caption{Component ablation on the Schaefer100 TVB-inspired simulated fMRI benchmark. All variants use the same training budget and evaluation protocol. AUROC and AUPRC are reported as mean $\pm$ 95\% confidence interval; F1 and SHD are reported as means.}
\label{tab:component_ablation_schaefer100}
\resizebox{\linewidth}{!}{
\begin{tabular}{lcccccc}
\toprule
Variant & Task subsample & Shared $W$ & AUROC $\uparrow$ & AUPRC $\uparrow$ & F1 $\uparrow$ & SHD $\downarrow$ \\
\midrule
Full SCDM
& \checkmark & \checkmark
& $\mathbf{0.867 \pm 0.006}$ & $\mathbf{0.728 \pm 0.019}$ & $\mathbf{0.675}$ & $\mathbf{770.6}$ \\
w/o task subsampling
& -- & \checkmark
& $0.747 \pm 0.008$ & $0.552 \pm 0.015$ & $0.392$ & $1020.8$ \\
w/o shared $W$ (MLP-only)
& \checkmark & --
& $0.732 \pm 0.012$ & $0.390 \pm 0.022$ & $0.401$ & $1486$ \\
\bottomrule
\end{tabular}}
\end{table}

\paragraph{Held-out-target protocol.}\label{app:heldout}
For the support/query diagnostic, the 100 targets of the simulated fMRI benchmark were partitioned into 70 meta-training targets, 10 validation targets, and 20 final-test targets. The reported diagnostic evaluates the 10 validation targets using seed 0; the final-test targets remain untouched. Joint MTL updates the shared parameters directly on the pooled support and query losses of the sampled targets, and the $K=0$ variant keeps the episodic query update but skips inner adaptation. At test time each variant adapts to each held-out target on $S=256$ support samples with $K_{\mathrm{test}}=50$ updates of $\theta$ and $W[:,i]$, and we report the mean squared error on a disjoint query window that is never used for adaptation. This measures how well the learned parameters adapt to unseen targets, not graph recovery.

\begin{table}[h]
\centering
\small
\caption{Held-out-target adaptation. All variants share architecture, data, initialization, training budget, and test-time adaptation; only the training objective differs.}
\label{tab:heldout_meta}
\begin{tabular}{@{}lcc@{}}
\toprule
Training variant & Inner steps in training & Post-adaptation query MSE $\downarrow$ \\
\midrule
Joint MTL (sampled targets) & -- & 4.305 \\
Episodic, no inner adaptation & $K=0$ & 5.487 \\
Full SCDM & $K=3$ & \textbf{3.782} \\
\bottomrule
\end{tabular}
\end{table}

\section{Latent Recovery and Results}
\label{app:latent_results}

This section reports supplementary experiments for the latent-augmented extension of SCDM. 
The main paper focuses on fully observed temporal causal discovery, where all variables involved in the target graph are available to the learner. 
Here, we consider a controlled partially observed setting in which a subset of variables is withheld from the learner input and treated as latent. 
The purpose of these experiments is not to solve unrestricted latent causal discovery, but to evaluate whether the proposed shared-parameter meta-learning framework can be extended to represent hidden-to-observed structural influence under controlled masking.

\subsection{Latent-Augmented Setting and Evaluation Protocol}
\label{app:latent_protocol}

We construct the latent benchmark from a fully observed temporal system by deliberately withholding a subset of variables from the learner input. 
Let
\[
s_t =
\begin{bmatrix}
x_t \\
z_t
\end{bmatrix}
\in \mathbb{R}^{d_{\mathrm{obs}} + q},
\]
where $x_t \in \mathbb{R}^{d_{\mathrm{obs}}}$ denotes the observed variables provided to the learner and $z_t \in \mathbb{R}^{q}$ denotes the withheld variables. 
During training and inference, the model receives only the observed trajectory $x_{1:T}$. 
The withheld trajectories and the full ground-truth graph are used only for evaluation.

The full structural matrix is partitioned as
\[
A =
\begin{bmatrix}
A_{XX} & A_{XZ} \\
A_{ZX} & A_{ZZ}
\end{bmatrix},
\]
where $A_{XX}$ denotes observed-to-observed edges, $A_{ZX}$ denotes latent-to-observed edges, $A_{XZ}$ denotes observed-to-latent edges, and $A_{ZZ}$ denotes latent-to-latent edges. 
In our controlled masking protocol, the withheld variables are selected so that they have no observed parents, yielding
\[
A_{XZ} = 0.
\]
This design avoids a setting in which the withheld variables are merely downstream effects of observed variables, and instead focuses on whether the model can represent hidden drivers of observed dynamics.

For the latent-augmented model, we extend the structural score matrix to the augmented state space:
\[
W =
\begin{bmatrix}
W_{XX} & 0 \\
W_{ZX} & W_{ZZ}
\end{bmatrix}.
\]
Here, $W_{XX}$ captures observed-to-observed temporal influence, $W_{ZX}$ captures latent-to-observed influence, and $W_{ZZ}$ captures latent temporal dependency. 
The zero block follows the controlled benchmark construction, where withheld variables are selected to have no observed parents.

The latent variables are learnable internal states $z_t\in\mathbb{R}^{q}$, not supervised targets: the withheld trajectories are never used for training or scoring. For an observed target $i$, the latent-aware predictor is
\begin{equation}
\widehat{x}^{(i)}_t=f_\theta\!\left(\mathrm{vec}\big(\widetilde H^{X,(i)}_t\big),\;W^{ZX}_{:,i}\odot z_{t-1}\right),\qquad \widetilde H^{X,(i)}_{t,j,\ell}=W^{XX}_{j,i}\,x^{(j)}_{t-\ell},
\end{equation}
so latent states enter each target only through its column of $W^{ZX}$, gated in the same way as observed sources. For a support or query window $b$, the task loss is
\begin{equation}
\mathcal{L}^{b}_{i}=\mathcal{L}^{b}_{\mathrm{pred},i}+\lambda_W\big\|W^{XX}_{:,i}\big\|_1+\lambda_z R^{b}_{z},
\end{equation}
where $\mathcal{L}^{b}_{\mathrm{pred},i}$ is the one-step squared error of observed target $i$ and $R^{b}_{z}$ penalizes the magnitude of the latent states in the window. Without $R^{b}_{z}$, the model could make the latent states arbitrarily large and the weights in $W^{ZX}$ arbitrarily small without changing any prediction.

After training, we extract scores over the augmented state space and evaluate the corresponding graph blocks:
\[
S =
\begin{bmatrix}
S_{XX} & 0 \\
S_{ZX} & S_{ZZ}
\end{bmatrix}.
\]
The $XX$ block is used to evaluate observed-to-observed recovery, while the $ZX$ block is used to evaluate hidden-to-observed structural influence. 
For AUROC and AUPRC, continuous scores are compared directly with the corresponding ground-truth graph blocks. 
For threshold-dependent metrics such as F1 and SHD, the threshold is selected on the validation split and applied once to the test split. 
Because learned latent coordinates are identifiable only up to permutation, metrics involving latent variables are computed after matching learned latent coordinates to the withheld variables.

\subsection{Masked NC8 Latent Recovery}
\label{app:latent_nc8}

We first use masked NC8 as a controlled diagnostic for the latent-augmented model. 
Two variables are withheld from the learner input and treated as latent variables, while the remaining six variables are used as observed inputs. 
Since the full eight-variable ground-truth graph is available, this setting allows direct evaluation of both the augmented full graph and the hidden-to-observed block.

This experiment is not intended as a baseline comparison on the small observed block. 
Instead, it tests whether the latent-augmented model can assign meaningful structural scores to withheld variables under a controlled masking protocol. 
For observed-to-observed recovery, we report both the learned structural score and the perturbation-based readout. 
For hidden-to-observed recovery, we use the learned latent-to-observed structural score because the withheld variables are not observed input channels.

\begin{table}[t]
\centering
\small
\caption{Controlled latent recovery on masked NC8. Upper block: full augmented graph and latent-to-observed (ZX) recovery with structural scores. Lower block: observed-to-observed (XX) recovery with the structural score and with the source-isolated readout. Mean $\pm$ 95\% CI over five graph replicas and three seeds.}
\label{tab:latent_nc8_recovery}
\begin{tabular}{@{}lcccc@{}}
\toprule
\multicolumn{5}{@{}l@{}}{\textit{Aggregate and latent-to-observed recovery using structural scores}} \\
\midrule
\textbf{Evaluation target} & \textbf{AUROC $\uparrow$} & \textbf{AUPRC $\uparrow$} & \textbf{F1 $\uparrow$} & \textbf{SHD $\downarrow$} \\
\midrule
Full augmented graph & $0.926 \pm 0.008$ & $0.634 \pm 0.026$ & $0.598 \pm 0.057$ & $4.87 \pm 0.49$ \\
Latent $\to$ observed (ZX) & $0.606 \pm 0.040$ & $0.680 \pm 0.028$ & $0.648 \pm 0.059$ & $3.80 \pm 0.44$ \\
\midrule
\multicolumn{5}{@{}l@{}}{\textit{Observed-block score diagnostic}} \\
\midrule
\textbf{Evaluation target} & \textbf{Score readout} & \textbf{AUROC $\uparrow$} & & \\
\midrule
Observed $\to$ observed (XX) & Structural score $|W|$ & $0.943 \pm 0.007$ & & \\
Observed $\to$ observed (XX) & Source-isolated readout & $0.761 \pm 0.071$ & & \\
\bottomrule
\end{tabular}
\end{table}

Table~\ref{tab:latent_nc8_recovery} shows that the latent-augmented model achieves strong aggregate recovery on the controlled masked NC8 benchmark. 
The structural score recovers the augmented full graph with high AUROC. 
For the hidden-to-observed block, the model obtains non-trivial recovery after latent-coordinate matching. 
Although $ZX$ AUROC remains modest, the high $ZX$ AUPRC and thresholded metrics indicate that the model can rank sparse hidden-to-observed influences above many non-edges under the controlled masking protocol. 
We therefore treat this experiment as a proof-of-concept diagnostic for latent recovery, while the main scalability claims remain focused on the fully observed setting.

\subsection{Observed-block Recovery under Controlled Masking}
\label{app:latent_observed_block}

We next evaluate the latent-augmented model on the controlled latent-view simulated EEG and simulated fMRI benchmarks. 
Unlike the masked NC8 diagnostic in Appendix~\ref{app:latent_nc8}, which directly examines aggregate and latent-to-observed recovery in a small system, this experiment focuses on whether the latent-augmented model preserves strong observed-to-observed recovery in higher-dimensional neuroimaging-inspired systems.

For both datasets, four variables are withheld from the learner input and treated as latent variables, leaving 96 observed variables for training and observed-block graph recovery. 
The full 100-node ground-truth graph is retained only for evaluation. 
For baselines and our latent-view model, we report recovery on the observed-to-observed XX block. 
We do not use these experiments as the main evidence for unrestricted hidden causal recovery; instead, they test whether controlled latent augmentation remains compatible with accurate observed-block causal discovery.

\begin{table}[t]
\centering
\footnotesize
\setlength{\tabcolsep}{4pt}
\renewcommand{\arraystretch}{0.95}
\caption{
Observed-block causal recovery under controlled latent-view masking on simulated EEG and fMRI.
Results are reported as mean $\pm$ 95\% confidence interval.
Best results are shown in \textbf{bold} and second-best results are \underline{underlined}.
}
\label{tab:latent_observed_block_results}
\begin{tabular}{lcccc}
\toprule
\multirow{2}{*}{Method}
& \multicolumn{2}{c}{Simulated EEG}
& \multicolumn{2}{c}{Simulated fMRI} \\
\cmidrule(lr){2-3}
\cmidrule(lr){4-5}
& XX AUROC $\uparrow$ & XX AUPRC $\uparrow$
& XX AUROC $\uparrow$ & XX AUPRC $\uparrow$ \\
\midrule
UnCLe
& 0.566 $\pm$ 0.005
& 0.146 $\pm$ 0.002
& \underline{0.626 $\pm$ 0.033}
& 0.228 $\pm$ 0.033 \\
PCMCI
& \underline{0.745 $\pm$ 0.011}
& \underline{0.388 $\pm$ 0.019}
& 0.566 $\pm$ 0.015
& 0.156 $\pm$ 0.012 \\
TCDF
& 0.681 $\pm$ 0.006
& 0.213 $\pm$ 0.012
& 0.590 $\pm$ 0.009
& 0.179 $\pm$ 0.011 \\
GVAR
& \textbf{0.747 $\pm$ 0.003}
& 0.339 $\pm$ 0.016
& 0.619 $\pm$ 0.014
& 0.214 $\pm$ 0.014 \\
CUTS+
& 0.589 $\pm$ 0.001
& 0.154 $\pm$ 0.000
& 0.527 $\pm$ 0.012
& 0.137 $\pm$ 0.006 \\
cMLP
& 0.659 $\pm$ 0.007
& 0.256 $\pm$ 0.020
& 0.623 $\pm$ 0.010
& \underline{0.235 $\pm$ 0.016} \\
Ours-Latent
& 0.709 $\pm$ 0.042
& \textbf{0.470 $\pm$ 0.077}
& \textbf{0.849 $\pm$ 0.011}
& \textbf{0.692 $\pm$ 0.013} \\
\bottomrule
\end{tabular}
\end{table}

Table~\ref{tab:latent_observed_block_results} shows that the latent-augmented model remains competitive on observed-block recovery under controlled masking. 
On simulated EEG, the method achieves the best $XX$ AUPRC, suggesting improved sparse-edge ranking even when its AUROC is not the highest. 
On simulated fMRI, the latent-augmented model obtains the best $XX$ AUROC and AUPRC, indicating that the latent extension preserves strong observed-block recovery in this structured neuroimaging-inspired setting.

\subsection{Interpretation and Limitations}
\label{app:latent_limitations}

The latent experiments should be interpreted as a controlled extension study rather than the central contribution of the paper. 
The main SCDM framework targets fully observed scalable temporal causal discovery, and the main empirical claims are based on non-latent causal recovery, high-dimensional scalability, and real-world LargeST evaluation.

The latent-augmented setting introduces explicit hidden nodes, but the learned latent coordinates are not identifiable in an absolute sense. 
They may be permuted, rescaled, or transformed without changing the observed prediction objective. 
Therefore, we do not claim recovery of the exact latent trajectories. 
Instead, we evaluate whether the model can assign useful structural scores to latent-to-observed influence under a controlled masking protocol where the withheld variables and full graph are known for evaluation.

These results provide supplementary evidence that the shared-parameter meta-learning framework can be extended beyond the fully observed setting. 
The masked NC8 experiment provides an interpretable proof of concept for hidden-to-observed recovery, while the masked simulated EEG and fMRI experiments evaluate observed-block recovery in higher-dimensional temporal systems. 
However, unrestricted latent causal discovery under arbitrary hidden confounding, instantaneous effects, or misspecified latent dimension remains outside the scope of this work.

\section{Details of Simulation Datasets}
\label{app:simulation_setups}

This appendix describes the construction of the simulation datasets used in the experiments. 
The purpose of these datasets is to provide controlled temporal systems with known directed ground-truth graphs, while retaining structured domain-inspired properties beyond purely random synthetic benchmarks. 
We include the low-dimensional NC8 benchmark, two neuroimaging-inspired source-level simulations, and a high-dimensional voxel-level fMRI benchmark. 
Dataset usage, train/validation/test splits, evaluation protocols, and compact overview figures are summarized separately in Appendix~\ref{app:dataset_overview}. The present appendix focuses on construction details and omits repeated overview material.

\subsection{NC8 Benchmark}
\label{app:nc8_setup}

NC8 is used as a low-dimensional controlled benchmark for temporal causal discovery. 
It contains eight variables generated from a nonlinear temporal system with a known directed lagged causal graph. 
The benchmark is useful for verifying whether a method can recover directed temporal dependencies in a setting where the graph is small enough for direct inspection and thresholded graph comparison.

For the fully observed setting, all eight variables are provided to the learner and the learned directed score matrix is evaluated against the known eight-node ground-truth graph. 
We report AUROC and AUPRC as the primary threshold-independent metrics. 
When binary graph metrics are reported, the threshold is selected on the validation split and applied once to the test split.

For the controlled latent-view setting, two variables are withheld from the learner input and treated as latent variables. 
The learner receives only the remaining six observed variables, while the full eight-variable graph is retained for evaluation. 
This setting is used only for the latent-augmented extension in Appendix~\ref{app:latent_results}, where we evaluate observed-to-observed and latent-to-observed structural recovery.

\begin{table}[t]
\centering
\small
\caption{Statistics of the NC8 benchmark and its controlled latent-view variant.}
\label{tab:nc8_setup}
\begin{tabular}{lcc}
\toprule
\textbf{Statistic} & \textbf{NC8} & \textbf{Masked NC8} \\
\midrule
Total variables in full system & 8 & 8 \\
Observed variables provided to learner & 8 & 6 \\
Withheld / latent variables & 0 & 2 \\
Time points & 2000 & 2000 \\
Ground-truth graph available & Yes & Yes \\
Maximum lag & 16 & 16 \\
Primary metrics & AUROC, AUPRC, F1, SHD & $XX$ / $Z{\to}X$ AUROC, AUPRC \\
Primary use & Low-dimensional recovery & Latent-extension evaluation \\
\bottomrule
\end{tabular}
\end{table}

\subsection{Simulated fMRI Benchmark}
\label{app:simulated_fmri_setup}

The simulated fMRI benchmark is a TVB-inspired, atlas-grounded, BOLD-like temporal simulation with known directed effective connectivity. 
It is designed to evaluate causal discovery under neuroimaging-inspired spatial organization, delayed neural interactions, and hemodynamic measurement distortion, while preserving full access to the ground-truth graph.

Each variable corresponds to a cortical region of interest (ROI) from the Schaefer-100 parcellation. 
ROIs are associated with Yeo-7 functional-network labels, hemisphere labels, and MNI-space centroids. 
These anatomical and functional metadata are used to construct a structured directed graph and to visualize recovered interactions in brain space.

\begin{table}[t]
\centering
\small
\caption{Atlas and anatomical metadata used in the simulated fMRI benchmark.}
\label{tab:fmri_atlas_metadata}
\begin{tabular}{ll}
\toprule
\textbf{Component} & \textbf{Setting} \\
\midrule
Parcellation & Schaefer-100 cortical atlas \\
Number of ROIs & 100 \\
Coordinate space & MNI152 \\
Functional labels & Yeo-7 canonical networks \\
Hemisphere labels & Left/right hemisphere labels retained \\
ROI metadata & ROI name, hemisphere, Yeo network, and MNI centroid \\
Use in simulation & Distance-aware and network-aware graph construction \\
Use in visualization & Brain-template and ROI-coordinate visualization \\
\bottomrule
\end{tabular}
\end{table}

The ground-truth graph is a sparse directed lagged effective-connectivity graph over the 100 ROIs. 
Edge probabilities are informed by inter-ROI distance, hemisphere membership, and functional-network organization. 
Nearby ROIs are assigned higher prior probability, but long-range, cross-network, and cross-hemisphere interactions are retained to avoid a purely local graph. 
This construction should be interpreted as an atlas-informed structural prior rather than empirical diffusion-MRI, DTI, or ENIGMA structural connectivity.

\begin{table}[t]
\centering
\small
\caption{Ground-truth graph construction in the simulated fMRI benchmark.}
\label{tab:fmri_graph_construction}
\begin{tabular}{ll}
\toprule
\textbf{Component} & \textbf{Implementation} \\
\midrule
Node definition & Cortical ROIs from the Schaefer-100 atlas \\
Topology type & Sparse directed lagged effective-connectivity graph \\
Structural prior & Distance-, hemisphere-, and network-informed edge sampling \\
Spatial constraint & Nearby ROIs have higher prior probability while long-range edges are retained \\
Network constraint & Within-network and selected cross-network interactions are allowed \\
Hemisphere constraint & Both intra-hemispheric and inter-hemispheric edges are retained \\
Lag structure & Directed effects occur over lags 1--3 \\
Binary adjacency & \texttt{A.npy}, shape $100 \times 100$ \\
Lag-resolved adjacency & \texttt{A\_lags.npy}, shape $3 \times 100 \times 100$ \\
Adjacency convention & $A_{ij}=1$ denotes a directed edge from source $i$ to target $j$ \\
Graph density & 12.32\% off-diagonal density \\
Important distinction & Atlas-informed prior, not empirical ENIGMA/DTI structural connectivity \\
\bottomrule
\end{tabular}
\end{table}

Latent neural source activity is generated using stable nonlinear lagged dynamics. 
Coupling weights are rescaled to satisfy a spectral-radius stability target, preventing unstable trajectories while preserving directed delayed interactions. 
The resulting latent source signals provide the pre-hemodynamic neural activity from which BOLD-like observations are generated.

\begin{table}[t]
\centering
\small
\caption{Neural source dynamics used in the simulated fMRI benchmark.}
\label{tab:fmri_neural_dynamics}
\begin{tabular}{ll}
\toprule
\textbf{Parameter} & \textbf{Value} \\
\midrule
Dynamics type & Stable nonlinear lagged ROI dynamics \\
Self autoregression & 0.50 \\
Coupling scale & 0.25 before stability rescaling \\
Spectral-radius target & 0.87 \\
Process noise standard deviation & 0.045 \\
Slow drive standard deviation & 0.015 \\
Pink-noise strength & 0.015 \\
Lag order & 1--3 \\
Time points per sample & 1000 \\
Final generator seed & 20260503 \\
\bottomrule
\end{tabular}
\end{table}

To model fMRI measurement distortion, latent neural source activity is transformed into BOLD-like observations through ROI-wise hemodynamic response function (HRF) convolution. 
The HRF parameters vary across ROIs, producing temporal smoothing and low-frequency dynamics typical of fMRI measurements. 
We use a double-gamma HRF and add observation noise before per-ROI normalization.

\begin{table}[t]
\centering
\small
\caption{Hemodynamic observation model for the simulated fMRI benchmark.}
\label{tab:fmri_hemodynamic_model}
\begin{tabular}{ll}
\toprule
\textbf{Parameter} & \textbf{Value} \\
\midrule
Observation type & BOLD-like ROI time series \\
TR & 2.0 s \\
HRF type & Double-gamma \\
HRF peak delay & Uniform(4.5, 6.5) s \\
HRF undershoot delay & Uniform(12.0, 18.0) s \\
HRF undershoot scale & Uniform(0.15, 0.30) \\
BOLD noise standard deviation & 0.020 \\
Output normalization & Per-ROI z-scored BOLD-like time series \\
\bottomrule
\end{tabular}
\end{table}

The same generated system supports two evaluation settings. 
In the full-view setting, all 100 ROIs are observed and the dataset is used for the main non-latent causal discovery experiments. 
In the latent-view setting, four anatomically defined ROIs are withheld from the learner input and treated as hidden variables, leaving 96 observed variables and 4 latent variables under the same full ground-truth graph. 
This controlled masking protocol allows us to evaluate observed-to-observed and hidden-to-observed structural recovery in the latent-augmented extension.

\begin{table}[t]
\centering
\small
\caption{Full-view and latent-view versions of the simulated fMRI benchmark.}
\label{tab:fmri_full_latent_view}
\begin{tabular}{lcc}
\toprule
\textbf{Statistic} & \textbf{Full-view} & \textbf{Latent-view} \\
\midrule
Total variables in full system & 100 & 100 \\
Observed variables provided to learner & 100 & 96 \\
Withheld / latent variables & 0 & 4 \\
Time points per sample & 1000 & 1000 \\
TR & 2.0 s & 2.0 s \\
Ground-truth graph available & Yes & Yes \\
Graph type & Directed lagged EC & Directed lagged EC \\
Maximum lag & 3 & 3 \\
Graph density & 12.32\% & 12.32\% \\
Train / validation / test split & 200 / 50 / 50 & 200 / 50 / 50 \\
Primary metrics & AUROC, AUPRC, F1, SHD & $XX$ / $Z{\to}X$ AUROC, AUPRC \\
\bottomrule
\end{tabular}
\end{table}

For the latent-view setting, the full 100-node graph is decomposed into observed and hidden blocks. 
Let $X$ denote the 96 observed ROIs and $Z$ denote the 4 masked ROIs. 
We store the corresponding ground-truth blocks to support block-wise evaluation of the latent-augmented model.

Before model evaluation, we checked simple diagnostic baselines to ensure that the benchmark is neither trivially easy nor unrecoverable. 
Random prediction matches the graph-density baseline, lagged correlation detects partial temporal signal, and Ridge-VAR remains close to random under nonlinear dynamics and hemodynamic filtering. 
Our method substantially improves over these diagnostics.

\begin{table}[t]
\centering
\small
\caption{Recoverability diagnostics for the simulated fMRI benchmark.}
\label{tab:fmri_recoverability}
\begin{tabularx}{\linewidth}{@{}llX@{}}
\toprule
\textbf{Method} & \textbf{AUROC / AUPRC} & \textbf{Interpretation} \\
\midrule
Random baseline & 0.500 / 0.123 & Expected from graph density \\
Lagged correlation max-lag & 0.673 / 0.285 & Simple temporal baseline has signal but remains limited \\
Ridge-VAR & 0.526 / 0.133 & Linear VAR remains near random \\
Ours, 5000 steps & 0.823 / 0.669 & Strong improvement over simple baselines \\
\bottomrule
\end{tabularx}
\end{table}

\paragraph{Scope and limitations.}
The simulated fMRI benchmark is TVB-inspired and atlas-grounded, but it is not a full biophysical TVB simulator. 
In particular, we do not claim to use empirical ENIGMA/DTI structural connectivity, Wilson--Cowan E/I neural mass dynamics, or millisecond-level axonal conduction delays. 
The purpose of the benchmark is to provide a controlled BOLD-like causal discovery setting with known directed ground truth, interpretable ROI variables, hemodynamic temporal distortion, and a controlled latent-view variant.

\subsection{Simulated EEG Benchmark}
\label{app:simulated_eeg_setup}

The simulated EEG benchmark is an atlas-grounded source-space EEG-like simulation with known directed ground truth. 
Each variable corresponds to a cortical source ROI from the Schaefer-100 parcellation, and the directed graph follows a network-modular prior based on the Yeo-7 functional systems. 
Source activity is generated with lagged temporal interactions, broadband $1/f$-like background activity, and theta-, alpha-, and beta-band oscillatory components. 
We additionally provide a simplified 64-channel synthetic source-to-sensor projection to mimic spatial mixing and volume-conduction-like leakage for visualization and robustness analysis.

The benchmark should be interpreted as a controlled source-space EEG-like simulation, not as a subject-specific biophysical EEG forward model. 
In particular, the synthetic projection is not intended to represent a realistic head model, BEM/FEM conductivity model, or empirical leadfield. 
Its purpose is to introduce interpretable source-to-sensor mixing while preserving full access to the source-level ground-truth causal graph for quantitative evaluation.

\begin{table}[!htbp]
\centering
\small
\caption{Atlas and source-space metadata used in the simulated EEG benchmark.}
\label{tab:eeg_source_metadata}
\begin{tabular}{ll}
\toprule
\textbf{Component} & \textbf{Setting} \\
\midrule
Parcellation & Schaefer-100 cortical atlas \\
Number of source ROIs & 100 \\
Functional labels & Yeo-7 canonical networks \\
Hemisphere labels & Left/right hemisphere labels retained \\
ROI metadata & ROI name, hemisphere, Yeo network, and atlas coordinate information \\
Use in simulation & Network-aware source graph construction \\
Use in visualization & Atlas slice visualization and network-sorted adjacency display \\
Important distinction & Source-space ROI simulation, not sensor-space empirical EEG recording \\
\bottomrule
\end{tabular}
\end{table}

The ground-truth structure is a sparse directed lagged graph over the 100 source ROIs. 
Edge probabilities are network-modular: within-network interactions are sampled more densely, while cross-network interactions are retained more sparsely to preserve long-range inter-system coupling. 
This produces a graph that is structured but nontrivial, with dense local blocks and sparse off-block directed edges.

The binary adjacency matrix follows a source-to-target convention: $A[i,j]=1$ denotes a directed edge from source ROI $i$ to target ROI $j$. 
The lag-resolved adjacency $A_{\mathrm{lags}}[\ell,i,j]$ follows the same source-to-target convention at lag $\ell$. 
The stored lagged weight tensor (file \texttt{W\_lags\_target\_source.npy}) is indexed as $[\ell, j, i]$, that is, lag, target, and source, whereas the adjacency files use the source-to-target order $[i, j]$.

\begin{table}[!htbp]
\centering
\small
\caption{Ground-truth source graph construction in the simulated EEG benchmark.}
\label{tab:eeg_graph_construction}
\begin{tabularx}{\linewidth}{@{}lX@{}}
\toprule
\textbf{Component} & \textbf{Implementation} \\
\midrule
Node definition & Cortical source ROIs from the Schaefer-100 atlas \\
Topology type & Sparse directed lagged source effective-connectivity graph \\
Structural prior & Yeo-network-modular edge sampling \\
Within-network structure & Denser directed interactions within the same functional network \\
Cross-network structure & Sparser directed interactions across functional networks \\
Lag structure & Directed effects occur over lags 1--5 \\
Binary adjacency & \texttt{A.npy}, shape $100 \times 100$ \\
Lag-resolved adjacency & \texttt{A\_lags.npy}, shape $5 \times 100 \times 100$ \\
Weight tensor & \texttt{W\_lags\_target\_source.npy}, shape $5 \times 100 \times 100$ \\
Adjacency convention & $A[i,j]=1$ denotes a directed edge $i \to j$ \\
Weight convention & \texttt{W\_lags\_target\_source[$\ell$, j, i]} stores target-by-source weights \\
Number of directed edges & 1138 \\
Graph density & 11.495\% \\
Important distinction & Network-informed synthetic source graph, not empirical anatomical connectivity \\
\bottomrule
\end{tabularx}
\end{table}

Source activity is generated with lagged directed interactions over the ground-truth source graph. 
To make the temporal signals more EEG-like, the generator includes broadband $1/f$-like background activity, mild nonstationary amplitude variation, and network-dependent oscillatory components in theta, alpha, and beta frequency ranges. 
The alpha band is dominant but not degenerate, while theta and beta activity remain present. 
This design avoids a purely periodic oscillator while preserving recoverable lagged temporal dependencies.

\begin{table}[!htbp]
\centering
\small
\caption{Source dynamics and spectral diagnostics for the simulated EEG benchmark.}
\label{tab:eeg_source_dynamics}
\begin{tabular}{ll}
\toprule
\textbf{Parameter} & \textbf{Value} \\
\midrule
Dynamics type & Stable lagged source dynamics with oscillatory components \\
Lag order & 1--5 \\
Time points per sample & 2000 \\
Frequency structure & Theta, alpha, and beta-band source activity \\
Broadband component & $1/f$-like background activity \\
Nonstationarity & Slow amplitude modulation and ROI-level variability \\
Theta power & 0.0887 \\
Alpha power & 0.3197 \\
Beta power & 0.0689 \\
Alpha dominance ratio & 2.0287 \\
Estimated $1/f$ spectral slope & $-1.4742$ \\
Output normalization & Source ROI time series are normalized before model evaluation \\
\bottomrule
\end{tabular}
\end{table}

In addition to the source-level time series used for causal recovery, we generate a 64-channel synthetic sensor projection. 
The projection uses a simplified source-to-sensor mixing matrix to mimic spatial leakage and volume-conduction-like mixing. 
This sensor projection is included for visualization and robustness analysis, but the primary causal discovery benchmark remains the source-level graph over atlas-defined ROIs.

The full sensor signal \texttt{EEG.npy} is generated from all 100 source ROIs. 
Therefore, in the latent-view setting, the withheld source activity can still contribute indirectly to the full synthetic sensor measurements. 
To separate this from the controlled source-level latent experiment, we also store \texttt{EEG\_obs96.npy}, which is generated using only the 96 observed source ROIs.

\begin{table}[!htbp]
\centering
\small
\caption{Synthetic source-to-sensor projection used in the simulated EEG benchmark.}
\label{tab:eeg_sensor_projection}
\begin{tabularx}{\linewidth}{@{}lX@{}}
\toprule
\textbf{Component} & \textbf{Setting} \\
\midrule
Number of synthetic sensors & 64 \\
Sensor layout & Synthetic 10--10-like layout \\
Leadfield / mixing matrix & \texttt{leadfield\_synthetic.npy}, shape $64 \times 100$ \\
Sensor positions & \texttt{sensor\_positions.npy}, shape $64 \times 3$ \\
Full sensor signal & \texttt{EEG.npy}, generated from all 100 source ROIs \\
Observed-only sensor signal & \texttt{EEG\_obs96.npy}, generated from 96 observed source ROIs \\
Raw sensor signal & \texttt{EEG\_raw.npy}, before final normalization or postprocessing \\
Visualization quantity & Relative alpha-band power over synthetic sensors \\
Primary purpose & Spatial mixing visualization and sensor-level qualitative analysis \\
Important distinction & Simplified synthetic projection, not a subject-specific BEM/FEM EEG forward model \\
\bottomrule
\end{tabularx}
\end{table}

The same generated system supports two evaluation settings. 
In the full-view setting, all 100 source ROIs are observed and the dataset is used for the main non-latent causal discovery experiments. 
In the latent-view setting, four source ROIs are withheld from the learner input and treated as hidden variables, leaving 96 observed source ROIs and 4 latent source variables under the same full ground-truth graph. 
The withheld ROIs are selected from different functional systems and hemispheric locations to create a controlled partially observed source-space setting. 
The full graph remains available only for evaluation, while the learner receives only the observed source trajectories.

\begin{table}[!htbp]
\centering
\small
\caption{Full-view and latent-view versions of the simulated EEG benchmark.}
\label{tab:eeg_full_latent_view}
\begin{tabular}{lcc}
\toprule
\textbf{Statistic} & \textbf{Full-view} & \textbf{Latent-view} \\
\midrule
Total source ROIs in full system & 100 & 100 \\
Observed variables provided to learner & 100 & 96 \\
Withheld / latent variables & 0 & 4 \\
Time points per sample & 2000 & 2000 \\
Ground-truth graph available & Yes & Yes \\
Graph type & Directed lagged source EC & Directed lagged source EC \\
Maximum lag & 5 & 5 \\
Graph density & 11.495\% & 11.495\% \\
Train / validation / test split & 200 / 50 / 50 & 200 / 50 / 50 \\
Primary input file & \texttt{X.npy} & \texttt{X\_obs.npy} \\
Hidden source file & N/A & \texttt{Z.npy} \\
Primary metrics & AUROC, AUPRC, F1, SHD & $XX$ / $Z{\to}X$ AUROC, AUPRC \\
\bottomrule
\end{tabular}
\end{table}

For the latent-view setting, the full 100-node graph is decomposed into observed and hidden blocks. 
Let $X$ denote the 96 observed source ROIs and $Z$ denote the 4 withheld source ROIs. 
We store the corresponding graph blocks to support block-wise evaluation of the latent-augmented model.

\begin{table}[h]
\centering
\small
\caption{Withheld ROIs in the latent-view simulated EEG setting.}
\label{tab:eeg_withheld_rois}
\begin{tabular}{ll}
\toprule
\textbf{Index} & \textbf{ROI name} \\
\midrule
5 & \texttt{7Networks\_LH\_Vis\_6} \\
58 & \texttt{7Networks\_RH\_SomMot\_1} \\
86 & \texttt{7Networks\_RH\_Cont\_Cing\_1} \\
41 & \texttt{7Networks\_LH\_Default\_PFC\_1} \\
\bottomrule
\end{tabular}
\end{table}

\begin{table}[!htbp]
\centering
\small
\caption{Ground-truth graph blocks in the latent-view simulated EEG setting.}
\label{tab:eeg_graph_blocks}
\begin{tabular}{ll}
\toprule
\textbf{Block} & \textbf{Description} \\
\midrule
\texttt{A\_xx.npy} & Directed edges among observed source ROIs, $X \to X$ \\
\texttt{A\_xz.npy} & Directed edges from observed source ROIs to hidden source ROIs, $X \to Z$ \\
\texttt{A\_zx.npy} & Directed edges from hidden source ROIs to observed source ROIs, $Z \to X$ \\
\texttt{A\_zz.npy} & Directed edges among hidden source ROIs, $Z \to Z$ \\
\bottomrule
\end{tabular}
\end{table}
\FloatBarrier

Table~\ref{tab:eeg_recoverability} presents preliminary recoverability diagnostics for the simulated EEG benchmark. The main multi-seed causal-recovery results and graph-score readout comparisons are reported in Table~\ref{tab:low_medium_causal_recovery} and Table~\ref{tab:readout_ablation}, respectively.

\begin{table}[H]
\centering
\small
\caption{Recoverability diagnostics for the simulated EEG benchmark. 
The ``Ours'' row is a one-seed smoke diagnostic on the full 100-source non-latent setting and is not a replacement for the final multi-seed experimental table.}
\label{tab:eeg_recoverability}
\begin{tabular}{lll}
\toprule
\textbf{Method} & \textbf{AUROC / AUPRC} & \textbf{Interpretation} \\
\midrule
Random baseline & 0.500 / 0.1149 & Expected from graph density \\
Lagged correlation & 0.6292 / 0.1826 & Simple temporal association has partial signal \\
Ridge-VAR & 0.6983 / 0.2668 & Linear VAR recovers part of the lagged structure \\
Ours, 3000 steps & 0.6933 / 0.4598 & Stronger sparse-edge ranking in AUPRC \\
\bottomrule
\end{tabular}
\end{table}

\paragraph{Scope and limitations.}
The simulated EEG benchmark is atlas-grounded and EEG-like, but it is not intended to be a fully realistic biophysical EEG simulator. 
We do not claim to model individual head geometry, tissue conductivity, skull anisotropy, BEM/FEM forward solutions, or empirical electrode recordings. 
The simplified source-to-sensor projection is used only to mimic source mixing and to provide sensor-space visualizations. 
The purpose of the benchmark is to provide a controlled source-level temporal causal discovery setting with known directed ground truth, interpretable cortical ROI variables, oscillatory EEG-like spectral properties, and a controlled latent-view variant.

\subsection{High-dimensional Voxel-level fMRI Benchmark}
\label{app:voxel_fmri_setup}

The voxel-level fMRI benchmark is designed to evaluate causal discovery in a substantially larger controlled system than the ROI-level simulated fMRI benchmark. 
While the ROI-level benchmark contains $100$ Schaefer cortical ROIs, the voxel-level benchmark contains $d=1000$ observed variables arranged on a $10\times 10\times 10$ spatial grid. 
This yields $d(d-1)=999{,}000$ candidate directed off-diagonal relations, allowing us to stress-test scalability in a regime where exhaustive target-wise modeling becomes expensive. 
Unlike LargeST, the voxel-level fMRI benchmark includes a complete synthetic directed ground-truth graph, so it supports AUROC and AUPRC evaluation in addition to runtime and enrichment analysis.

The benchmark should be interpreted as a controlled high-dimensional BOLD-like temporal simulation rather than a full biophysical simulator of empirical voxel-level fMRI acquisition. 
The generated variables are organized into spatial regions, with local lagged dependencies between neighboring voxels and additional long-range region-to-region influences. 
The simulation also includes latent regional drivers and nonlinear lagged dynamics, followed by fMRI-like observation effects such as hemodynamic response kernels, downsampling, and observation noise. 
Stability is controlled through global inflow scaling rather than an explicitly named spectral-radius rescaling step. 
The purpose of this benchmark is therefore to preserve a known directed temporal graph while increasing the dimensionality to a voxel-scale setting.

For the high-dimensional results reported in this paper, we use the \texttt{voxel1000\_v2} version of the benchmark. 
This version contains $T=800$ time points and a sparse directed observed graph with $7142$ true off-diagonal edges out of $999{,}000$ possible directed pairs, corresponding to a density of approximately $0.715\%$. 
The lag-resolved graph has maximum lag $p=3$ and is collapsed into a directed observed adjacency matrix for graph-level evaluation. 
Self-edges are excluded throughout. 
Because applying source-isolated perturbation readout to all ordered pairs is computationally expensive at this scale, the voxel-level fMRI experiments use the learned edge-strength score $|W_{ij}|$ directly for ranking candidate edges. Table~\ref{tab:voxel_fmri_summary} summarizes the benchmark configuration and evaluation settings.

\begin{table}[H]
\centering
\small
\caption{At-a-glance summary of the high-dimensional voxel-level fMRI benchmark used in the scalability experiments.}
\label{tab:voxel_fmri_summary}
\begin{tabular}{ll}
\toprule
\textbf{Statistic} & \textbf{Value} \\
\midrule
Dataset version & \texttt{voxel1000\_v2} \\
Dataset type & Controlled high-dimensional BOLD-like temporal simulation \\
Observed variables & $1000$ \\
Latent drivers & $4$ \\
Time points & $800$ \\
Spatial layout & $10\times 10\times 10$ voxel grid \\
Lag order & $3$ \\
Candidate directed off-diagonal pairs & $999{,}000$ \\
True directed observed edges & $7142$ \\
Graph density & $0.007149$ \\
Evaluation graph & Lagged edges collapsed into directed observed adjacency \\
Self-edges & Excluded \\
Score used for scalability evaluation & Learned $|W_{ij}|$ score \\
Top-$k$ sparsity matching & $k=7142$ \\
Primary metrics & AUROC, AUPRC, AUPRC enrichment, runtime \\
Primary use & High-dimensional scalability and runtime--performance evaluation \\
\bottomrule
\end{tabular}
\end{table}

This benchmark complements both the ROI-level simulated fMRI benchmark and the real-world LargeST experiments. 
Compared with the ROI-level benchmark, it increases the number of variables by an order of magnitude while retaining a known synthetic directed graph for quantitative recovery evaluation. 
Compared with LargeST, it is not intended to model an empirical traffic or neuroimaging system, but it provides complete ground truth and therefore allows AUROC, AUPRC, and enrichment-based analysis under controlled high-dimensional conditions.

\subsection{Recoverability and Evaluation Scope}
\label{app:simulation_recoverability_scope}

The simulation datasets are intended to support controlled causal recovery evaluation with known directed ground truth. 
They should not be interpreted as fully realistic biomedical simulators. 
Instead, they provide structured temporal systems that combine recoverable directed dependencies, domain-inspired organization, and controlled access to ground-truth graphs.

The low-dimensional NC8 benchmark validates whether a method can recover directed temporal dependencies in a small nonlinear system. 
The simulated fMRI and simulated EEG benchmarks evaluate recovery in medium-dimensional neuroimaging-inspired systems with atlas-based organization and known directed graphs. 
The voxel-level fMRI benchmark increases the number of variables to stress-test runtime, memory, and high-dimensional score extraction. 
Together, these simulations separate causal recovery evaluation, scalability analysis, and real-world qualitative assessment, while avoiding unsupported claims about exact biological realism.

\section{Broader Impacts}
\label{app:broader_impacts}

SCDM is a scalable causal discovery framework for nonlinear temporal systems. By reducing the cost of learning directed dependency scores in high-dimensional time series, the method may support more efficient scientific analysis and decision-making in domains such as neuroscience, traffic systems, finance, climate, and engineering monitoring. In particular, scalable temporal causal discovery can help researchers explore structured dynamic relations in settings where exhaustive target-wise modeling or repeated conditional testing is computationally expensive.

However, the outputs of SCDM should be interpreted with caution. The method is based on observational temporal data and relies on assumptions such as delayed causal influence, sufficient observation of relevant variables, correct lag specification, and informative predictive risk gaps. If these assumptions are violated, learned directed scores may reflect predictive association rather than true causal mechanisms. Misuse of such scores in high-stakes settings, such as clinical decision-making, public policy, infrastructure control, or financial intervention, could lead to misleading conclusions or harmful actions. Therefore, SCDM should be used as a tool for hypothesis generation and structural analysis, and its findings should be validated with domain knowledge, controlled interventions, or external evidence whenever possible.

\FloatBarrier

\newpage

\clearpage
\section*{NeurIPS Paper Checklist}



\begin{enumerate}

\item {\bf Claims}
    \item[] Question: Do the main claims made in the abstract and introduction accurately reflect the paper's contributions and scope?
    \item[] Answer: \answerYes{} 
    \item[] Justification: The abstract and introduction state the main scope and contributions of SCDM, including shared-parameter meta-learning, target-task subsampling, post-training graph readout, and the fully observed delayed-system recovery setting. The corresponding theoretical and experimental support is provided in Sections~\ref{sec:theory}--\ref{sec:experiments} and Appendices~\ref{app:theory}--\ref{app:experiments}.
    \item[] Guidelines:
    \begin{itemize}
        \item The answer \answerNA{} means that the abstract and introduction do not include the claims made in the paper.
        \item The abstract and/or introduction should clearly state the claims made, including the contributions made in the paper and important assumptions and limitations. A \answerNo{} or \answerNA{} answer to this question will not be perceived well by the reviewers. 
        \item The claims made should match theoretical and experimental results, and reflect how much the results can be expected to generalize to other settings. 
        \item It is fine to include aspirational goals as motivation as long as it is clear that these goals are not attained by the paper. 
    \end{itemize}

\item {\bf Limitations}
    \item[] Question: Does the paper discuss the limitations of the work performed by the authors?
    \item[] Answer: \answerYes{} 
    \item[] Justification: Section 4.3 and Section 6 state the assumptions of the recovery result (causal sufficiency, correct lag order, no instantaneous effects, positive risk gap) and explain how a hidden common cause breaks the margin condition. Section 5.2 states that SCDM is not uniformly best on small benchmarks. Appendix B.3 reports a collinearity stress test in which individual edge scores can move to correlated non-parents, and Appendix C reports controlled latent-variable diagnostics. LargeST results are presented as spatial plausibility rather than causal recovery (Section 5.4).
    \item[] Guidelines:
    \begin{itemize}
        \item The answer \answerNA{} means that the paper has no limitation while the answer \answerNo{} means that the paper has limitations, but those are not discussed in the paper. 
        \item The authors are encouraged to create a separate ``Limitations'' section in their paper.
        \item The paper should point out any strong assumptions and how robust the results are to violations of these assumptions (e.g., independence assumptions, noiseless settings, model well-specification, asymptotic approximations only holding locally). The authors should reflect on how these assumptions might be violated in practice and what the implications would be.
        \item The authors should reflect on the scope of the claims made, e.g., if the approach was only tested on a few datasets or with a few runs. In general, empirical results often depend on implicit assumptions, which should be articulated.
        \item The authors should reflect on the factors that influence the performance of the approach. For example, a facial recognition algorithm may perform poorly when image resolution is low or images are taken in low lighting. Or a speech-to-text system might not be used reliably to provide closed captions for online lectures because it fails to handle technical jargon.
        \item The authors should discuss the computational efficiency of the proposed algorithms and how they scale with dataset size.
        \item If applicable, the authors should discuss possible limitations of their approach to address problems of privacy and fairness.
        \item While the authors might fear that complete honesty about limitations might be used by reviewers as grounds for rejection, a worse outcome might be that reviewers discover limitations that aren't acknowledged in the paper. The authors should use their best judgment and recognize that individual actions in favor of transparency play an important role in developing norms that preserve the integrity of the community. Reviewers will be specifically instructed to not penalize honesty concerning limitations.
    \end{itemize}

\item {\bf Theory assumptions and proofs}
    \item[] Question: For each theoretical result, does the paper provide the full set of assumptions and a complete (and correct) proof?
    \item[] Answer: \answerYes{} 
    \item[] Justification: The main finite-error recovery result is stated in Theorem~\ref{thm:finite_error_graph_recovery}, with assumptions and error terms described in Section~\ref{sec:theory}. The full assumptions, supporting lemmas, proof, and asymptotic interpretation are provided in Appendices~\ref{app:theory_assumptions}--\ref{app:theory_interpretation}.
    \item[] Guidelines:
    \begin{itemize}
        \item The answer \answerNA{} means that the paper does not include theoretical results. 
        \item All the theorems, formulas, and proofs in the paper should be numbered and cross-referenced.
        \item All assumptions should be clearly stated or referenced in the statement of any theorems.
        \item The proofs can either appear in the main paper or the supplemental material, but if they appear in the supplemental material, the authors are encouraged to provide a short proof sketch to provide intuition. 
        \item Inversely, any informal proof provided in the core of the paper should be complemented by formal proofs provided in appendix or supplemental material.
        \item Theorems and Lemmas that the proof relies upon should be properly referenced. 
    \end{itemize}

    \item {\bf Experimental result reproducibility}
    \item[] Question: Does the paper fully disclose all the information needed to reproduce the main experimental results of the paper to the extent that it affects the main claims and/or conclusions of the paper (regardless of whether the code and data are provided or not)?
    \item[] Answer: \answerYes{} 
    \item[] Justification: The paper describes the benchmark groups, baselines, metrics, thresholding protocol, runtime measurement protocol, and evaluation settings in Section~\ref{sec:experiments} and Appendix~\ref{app:experiments}. Additional dataset details, splits, hyperparameters, and simulation setups are provided in Appendices~\ref{app:baseline_hparams}, \ref{app:dataset_overview}, and \ref{app:simulation_setups}.
    \item[] Guidelines:
    \begin{itemize}
        \item The answer \answerNA{} means that the paper does not include experiments.
        \item If the paper includes experiments, a \answerNo{} answer to this question will not be perceived well by the reviewers: Making the paper reproducible is important, regardless of whether the code and data are provided or not.
        \item If the contribution is a dataset and\slash or model, the authors should describe the steps taken to make their results reproducible or verifiable. 
        \item Depending on the contribution, reproducibility can be accomplished in various ways. For example, if the contribution is a novel architecture, describing the architecture fully might suffice, or if the contribution is a specific model and empirical evaluation, it may be necessary to either make it possible for others to replicate the model with the same dataset, or provide access to the model. In general. releasing code and data is often one good way to accomplish this, but reproducibility can also be provided via detailed instructions for how to replicate the results, access to a hosted model (e.g., in the case of a large language model), releasing of a model checkpoint, or other means that are appropriate to the research performed.
        \item While NeurIPS does not require releasing code, the conference does require all submissions to provide some reasonable avenue for reproducibility, which may depend on the nature of the contribution. For example
        \begin{enumerate}
            \item If the contribution is primarily a new algorithm, the paper should make it clear how to reproduce that algorithm.
            \item If the contribution is primarily a new model architecture, the paper should describe the architecture clearly and fully.
            \item If the contribution is a new model (e.g., a large language model), then there should either be a way to access this model for reproducing the results or a way to reproduce the model (e.g., with an open-source dataset or instructions for how to construct the dataset).
            \item We recognize that reproducibility may be tricky in some cases, in which case authors are welcome to describe the particular way they provide for reproducibility. In the case of closed-source models, it may be that access to the model is limited in some way (e.g., to registered users), but it should be possible for other researchers to have some path to reproducing or verifying the results.
        \end{enumerate}
    \end{itemize}

\item {\bf Open access to data and code}
    \item[] Question: Does the paper provide open access to the data and code, with sufficient instructions to faithfully reproduce the main experimental results, as described in supplemental material?
    \item[] Answer: \answerYes{}{} 
    \item[] Justification: Code, configuration files, and data-generation scripts for all synthetic benchmarks are available at \url{https://github.com/ShaogangRen/SCDM}, with commands to reproduce each table. LargeST is publicly available from its original authors~\citep{liu2023largest}.
    \item[] Guidelines:
    \begin{itemize}
        \item The answer \answerNA{} means that paper does not include experiments requiring code.
        \item Please see the NeurIPS code and data submission guidelines (\url{https://neurips.cc/public/guides/CodeSubmissionPolicy}) for more details.
        \item While we encourage the release of code and data, we understand that this might not be possible, so \answerNo{} is an acceptable answer. Papers cannot be rejected simply for not including code, unless this is central to the contribution (e.g., for a new open-source benchmark).
        \item The instructions should contain the exact command and environment needed to run to reproduce the results. See the NeurIPS code and data submission guidelines (\url{https://neurips.cc/public/guides/CodeSubmissionPolicy}) for more details.
        \item The authors should provide instructions on data access and preparation, including how to access the raw data, preprocessed data, intermediate data, and generated data, etc.
        \item The authors should provide scripts to reproduce all experimental results for the new proposed method and baselines. If only a subset of experiments are reproducible, they should state which ones are omitted from the script and why.
        \item At submission time, to preserve anonymity, the authors should release anonymized versions (if applicable).
        \item Providing as much information as possible in supplemental material (appended to the paper) is recommended, but including URLs to data and code is permitted.
    \end{itemize}

\item {\bf Experimental setting/details}
    \item[] Question: Does the paper specify all the training and test details (e.g., data splits, hyperparameters, how they were chosen, type of optimizer) necessary to understand the results?
    \item[] Answer: \answerYes{} 
    \item[] Justification: Section~\ref{sec:exp_setup} describes the main experimental setup, including baselines, metrics, seeds, thresholding, runtime measurement, and high-dimensional evaluation protocol. Detailed hyperparameters, dataset splits, dataset roles, and benchmark-specific settings are provided in
Appendix~\ref{app:baseline_hparams}, Table~\ref{tab:hparam_scdm},
Table~\ref{tab:dataset_splits}, Table~\ref{tab:evaluation_usage}, and Appendix~\ref{app:simulation_setups}.
    \item[] Guidelines:
    \begin{itemize}
        \item The answer \answerNA{} means that the paper does not include experiments.
        \item The experimental setting should be presented in the core of the paper to a level of detail that is necessary to appreciate the results and make sense of them.
        \item The full details can be provided either with the code, in appendix, or as supplemental material.
    \end{itemize}

\item {\bf Experiment statistical significance}
    \item[] Question: Does the paper report error bars suitably and correctly defined or other appropriate information about the statistical significance of the experiments?
    \item[] Answer: \answerYes{} 
    \item[] Justification: Uncertainty estimates, where reported, are defined in the corresponding table and figure captions. Table 3 reports mean $\pm$ SD across three seeds; Table 2 and Figure 6 report means across three seeds without intervals.
    \item[] Guidelines:
    \begin{itemize}
        \item The answer \answerNA{} means that the paper does not include experiments.
        \item The authors should answer \answerYes{} if the results are accompanied by error bars, confidence intervals, or statistical significance tests, at least for the experiments that support the main claims of the paper.
        \item The factors of variability that the error bars are capturing should be clearly stated (for example, train/test split, initialization, random drawing of some parameter, or overall run with given experimental conditions).
        \item The method for calculating the error bars should be explained (closed form formula, call to a library function, bootstrap, etc.)
        \item The assumptions made should be given (e.g., Normally distributed errors).
        \item It should be clear whether the error bar is the standard deviation or the standard error of the mean.
        \item It is OK to report 1-sigma error bars, but one should state it. The authors should preferably report a 2-sigma error bar than state that they have a 96\% CI, if the hypothesis of Normality of errors is not verified.
        \item For asymmetric distributions, the authors should be careful not to show in tables or figures symmetric error bars that would yield results that are out of range (e.g., negative error rates).
        \item If error bars are reported in tables or plots, the authors should explain in the text how they were calculated and reference the corresponding figures or tables in the text.
    \end{itemize}

\item {\bf Experiments compute resources}
    \item[] Question: For each experiment, does the paper provide sufficient information on the computer resources (type of compute workers, memory, time of execution) needed to reproduce the experiments?
    \item[] Answer: \answerYes{} 
    \item[] Justification: Appendix B.1 lists the GPU type (NVIDIA A100 80\,GB), CPU memory, and per-job time limits for each experiment. Section 5.5 reports training time for the task-batch ablation, and Appendix B.3 reports runtime and peak GPU memory for the large-scale comparisons.
    \item[] Guidelines:
    \begin{itemize}
        \item The answer \answerNA{} means that the paper does not include experiments.
        \item The paper should indicate the type of compute workers CPU or GPU, internal cluster, or cloud provider, including relevant memory and storage.
        \item The paper should provide the amount of compute required for each of the individual experimental runs as well as estimate the total compute. 
        \item The paper should disclose whether the full research project required more compute than the experiments reported in the paper (e.g., preliminary or failed experiments that didn't make it into the paper). 
    \end{itemize}
    
\item {\bf Code of ethics}
    \item[] Question: Does the research conducted in the paper conform, in every respect, with the NeurIPS Code of Ethics \url{https://neurips.cc/public/EthicsGuidelines}?
    \item[] Answer: \answerYes{} 
    \item[] Justification: The work is a methodological study of temporal causal discovery and does not involve human subjects, private personal data, or deployment of a decision-making system. The experiments use controlled synthetic, neuroimaging-inspired, high-dimensional simulated, and public traffic benchmark settings as described in Section~\ref{sec:experiments} and Appendix~\ref{app:dataset_overview}.
    \item[] Guidelines:
    \begin{itemize}
        \item The answer \answerNA{} means that the authors have not reviewed the NeurIPS Code of Ethics.
        \item If the authors answer \answerNo, they should explain the special circumstances that require a deviation from the Code of Ethics.
        \item The authors should make sure to preserve anonymity (e.g., if there is a special consideration due to laws or regulations in their jurisdiction).
    \end{itemize}

\item {\bf Broader impacts}
    \item[] Question: Does the paper discuss both potential positive societal impacts and negative societal impacts of the work performed?
    \item[] Answer: \answerYes{} 
    \item[] Justification: Appendix~\ref{app:broader_impacts} discusses both potential
positive impacts, such as more efficient scientific analysis of high-dimensional temporal systems, and potential negative impacts, such as misleading conclusions or harmful decisions if observational causal scores are used in high-stakes domains without validation. The paper emphasizes that SCDM should be used for hypothesis generation and structural analysis rather than direct deployment without domain knowledge or external evidence.
    \item[] Guidelines:
    \begin{itemize}
        \item The answer \answerNA{} means that there is no societal impact of the work performed.
        \item If the authors answer \answerNA{} or \answerNo, they should explain why their work has no societal impact or why the paper does not address societal impact.
        \item Examples of negative societal impacts include potential malicious or unintended uses (e.g., disinformation, generating fake profiles, surveillance), fairness considerations (e.g., deployment of technologies that could make decisions that unfairly impact specific groups), privacy considerations, and security considerations.
        \item The conference expects that many papers will be foundational research and not tied to particular applications, let alone deployments. However, if there is a direct path to any negative applications, the authors should point it out. For example, it is legitimate to point out that an improvement in the quality of generative models could be used to generate Deepfakes for disinformation. On the other hand, it is not needed to point out that a generic algorithm for optimizing neural networks could enable people to train models that generate Deepfakes faster.
        \item The authors should consider possible harms that could arise when the technology is being used as intended and functioning correctly, harms that could arise when the technology is being used as intended but gives incorrect results, and harms following from (intentional or unintentional) misuse of the technology.
        \item If there are negative societal impacts, the authors could also discuss possible mitigation strategies (e.g., gated release of models, providing defenses in addition to attacks, mechanisms for monitoring misuse, mechanisms to monitor how a system learns from feedback over time, improving the efficiency and accessibility of ML).
    \end{itemize}
    
\item {\bf Safeguards}
    \item[] Question: Does the paper describe safeguards that have been put in place for responsible release of data or models that have a high risk for misuse (e.g., pre-trained language models, image generators, or scraped datasets)?
    \item[] Answer: \answerNA{} 
    \item[] Justification: The paper does not release high-risk models, scraped personal datasets, pretrained generative models, or deployed systems with substantial misuse potential. The contribution is an algorithmic framework evaluated on synthetic, neuroimaging-inspired, and traffic benchmark datasets.
    \item[] Guidelines:
    \begin{itemize}
        \item The answer \answerNA{} means that the paper poses no such risks.
        \item Released models that have a high risk for misuse or dual-use should be released with necessary safeguards to allow for controlled use of the model, for example by requiring that users adhere to usage guidelines or restrictions to access the model or implementing safety filters. 
        \item Datasets that have been scraped from the Internet could pose safety risks. The authors should describe how they avoided releasing unsafe images.
        \item We recognize that providing effective safeguards is challenging, and many papers do not require this, but we encourage authors to take this into account and make a best faith effort.
    \end{itemize}

\item {\bf Licenses for existing assets}
    \item[] Question: Are the creators or original owners of assets (e.g., code, data, models), used in the paper, properly credited and are the license and terms of use explicitly mentioned and properly respected?
    \item[] Answer: \answerYes{} 
    \item[] Justification: We cite the original papers and public repositories of all baseline methods (URLs in Appendix B.1) and use them under their stated licenses. LargeST~\citep{liu2023largest} is used under its CC BY-NC 4.0 license. The Schaefer-100 parcellation~\citep{schaefer2018local} and the Yeo-7 network labels~\citep{yeo2011organization} are cited where used. 
    
    \item[] Guidelines:
    \begin{itemize}
        \item The answer \answerNA{} means that the paper does not use existing assets.
        \item The authors should cite the original paper that produced the code package or dataset.
        \item The authors should state which version of the asset is used and, if possible, include a URL.
        \item The name of the license (e.g., CC-BY 4.0) should be included for each asset.
        \item For scraped data from a particular source (e.g., website), the copyright and terms of service of that source should be provided.
        \item If assets are released, the license, copyright information, and terms of use in the package should be provided. For popular datasets, \url{paperswithcode.com/datasets} has curated licenses for some datasets. Their licensing guide can help determine the license of a dataset.
        \item For existing datasets that are re-packaged, both the original license and the license of the derived asset (if it has changed) should be provided.
        \item If this information is not available online, the authors are encouraged to reach out to the asset's creators.
    \end{itemize}

\item {\bf New assets}
    \item[] Question: Are new assets introduced in the paper well documented and is the documentation provided alongside the assets?
    \item[] Answer: \answerNA{} 
    \item[] Justification: The paper does not release a new public dataset or pretrained model as a primary asset. The synthetic and neuroimaging-inspired benchmark construction details are documented in Appendices~\ref{app:simulation_setups}, \ref{app:simulated_fmri_setup}, \ref{app:simulated_eeg_setup}, and \ref{app:voxel_fmri_setup}.
    \item[] Guidelines:
    \begin{itemize}
        \item The answer \answerNA{} means that the paper does not release new assets.
        \item Researchers should communicate the details of the dataset\slash code\slash model as part of their submissions via structured templates. This includes details about training, license, limitations, etc. 
        \item The paper should discuss whether and how consent was obtained from people whose asset is used.
        \item At submission time, remember to anonymize your assets (if applicable). You can either create an anonymized URL or include an anonymized zip file.
    \end{itemize}

\item {\bf Crowdsourcing and research with human subjects}
    \item[] Question: For crowdsourcing experiments and research with human subjects, does the paper include the full text of instructions given to participants and screenshots, if applicable, as well as details about compensation (if any)? 
    \item[] Answer: \answerNA{} 
    \item[] Justification: The work does not involve crowdsourcing, user studies, or experiments with human subjects. All experiments are conducted on synthetic, neuroimaging-inspired, high-dimensional simulated, or public traffic benchmark data.
    \item[] Guidelines:
    \begin{itemize}
        \item The answer \answerNA{} means that the paper does not involve crowdsourcing nor research with human subjects.
        \item Including this information in the supplemental material is fine, but if the main contribution of the paper involves human subjects, then as much detail as possible should be included in the main paper. 
        \item According to the NeurIPS Code of Ethics, workers involved in data collection, curation, or other labor should be paid at least the minimum wage in the country of the data collector. 
    \end{itemize}

\item {\bf Institutional review board (IRB) approvals or equivalent for research with human subjects}
    \item[] Question: Does the paper describe potential risks incurred by study participants, whether such risks were disclosed to the subjects, and whether Institutional Review Board (IRB) approvals (or an equivalent approval/review based on the requirements of your country or institution) were obtained?
    \item[] Answer: \answerNA{} 
    \item[] Justification: The work does not involve human-subject research, personal data collection, or crowdsourcing. Therefore, IRB approval or an equivalent human-subject review is not applicable.
    \item[] Guidelines:
    \begin{itemize}
        \item The answer \answerNA{} means that the paper does not involve crowdsourcing nor research with human subjects.
        \item Depending on the country in which research is conducted, IRB approval (or equivalent) may be required for any human subjects research. If you obtained IRB approval, you should clearly state this in the paper. 
        \item We recognize that the procedures for this may vary significantly between institutions and locations, and we expect authors to adhere to the NeurIPS Code of Ethics and the guidelines for their institution. 
        \item For initial submissions, do not include any information that would break anonymity (if applicable), such as the institution conducting the review.
    \end{itemize}

\item {\bf Declaration of LLM usage}
    \item[] Question: Does the paper describe the usage of LLMs if it is an important, original, or non-standard component of the core methods in this research? Note that if the LLM is used only for writing, editing, or formatting purposes and does \emph{not} impact the core methodology, scientific rigor, or originality of the research, declaration is not required.
    \item[] Answer: \answerNA{} 
    \item[] Justification: The core methodology, theoretical analysis, experiments, and evaluation pipeline do not use LLMs as an important, original, or non-standard component. Any use of general writing or formatting assistance does not affect the scientific method or experimental results.
    \item[] Guidelines:
    \begin{itemize}
        \item The answer \answerNA{} means that the core method development in this research does not involve LLMs as any important, original, or non-standard components.
        \item Please refer to our LLM policy in the NeurIPS handbook for what should or should not be described.
    \end{itemize}

\end{enumerate}

\end{document}